\def\N{\mathcal{N}}
\def\E{\mathbb{E}}
\def\R{\mathbb{R}}
\def\O{\mathcal{O}}
\def\X{\mathbf{X}}
\def\setX{\mathrm{X}}
\def\Z{\mathbf{Z}}
\def\w{\mathbf{w}}
\def\bw{\bar{\w}}
\def\a{\alpha}
\def\s{\sigma}
\def\d{\delta}
\def\e{\varepsilon}
\def\ph{\varphi}
\def\th{\theta}
\def\barf{\bar{f}}
\def\barh{\bar{h}}
\def\p{\partial}
\def\n{\nabla}
\def\KL{{\mathrm{D}_\mathrm{KL}}}
\def\L{\mathcal{L}}
\def\Lrec{{\L_\mathrm{rec}}}
\def\Ldiff{{\L_\mathrm{diff}}}
\def\Lprior{{\L_\mathrm{prior}}}
\newcommand{\TODO}[1][]{{\ifthenelse{\isempty{#1}}{\color{red}\bf TODO}{\color{red}\bf (TODO: #1)}}}
\theoremstyle{plain}
\newtheorem{theorem}{Theorem}[section]
\newtheorem{proposition}[theorem]{Proposition}
\theoremstyle{definition}
\theoremstyle{remark}
\newglossaryentry{simfree}{
    name={simulation-free},
    description={}
}
\newglossaryentry{simbased}{
    name={simulation-based},
    description={}
}
\newglossaryentry{prproc}{
    name={prior process},
    plural={prior processes},
    description={}
}
\newglossaryentry{ptproc}{
    name={posterior process},
    plural={posterior processes},
    description={}
}
\newglossaryentry{smc}{
    name={Sequential Monte Carlo},
    description={}
}
\newacronym{sde}{SDE}{Stochastic Differential Equation}
\newacronym{ode}{ODE}{Ordinary Differential Equation}
\newacronym{elbo}{ELBO}{Evidence Lower Bound}
\newacronym{nelbo}{NELBO}{Negative Evidence Lower Bound}
\newacronym{nll}{NLL}{negative log-likelihood}
\newacronym{kl}{KL}{Kullback–Leibler}
\newacronym{vjp}{VJP}{Vector Jacobian Product}
\newacronym{jvp}{JVP}{Jacobian Vector Product}
\newacronym{vae}{VAE}{Variational Autoencoder}
\icmltitlerunning{SDE Matching: Scalable and Simulation-Free Training of Latent SDEs}
\begin{document}

\twocolumn[
\icmltitle{SDE Matching: Scalable and Simulation-Free Training\\
of Latent Stochastic Differential Equations}

% It is OKAY to include author information, even for blind
% submissions: the style file will automatically remove it for you
% unless you've provided the [accepted] option to the icml2025
% package.

% List of affiliations: The first argument should be a (short)
% identifier you will use later to specify author affiliations
% Academic affiliations should list Department, University, City, Region, Country
% Industry affiliations should list Company, City, Region, Country

% You can specify symbols, otherwise they are numbered in order.
% Ideally, you should not use this facility. Affiliations will be numbered
% in order of appearance and this is the preferred way.
\icmlsetsymbol{equal}{*}

\begin{icmlauthorlist}
\icmlauthor{Grigory Bartosh}{uva}
\icmlauthor{Dmitry Vetrov}{ctr}
\icmlauthor{Christian A. Naesseth}{uva}
\end{icmlauthorlist}

\icmlaffiliation{uva}{University of Amsterdam}
\icmlaffiliation{ctr}{Constructor University, Bremen}

\icmlcorrespondingauthor{Grigory Bartosh}{g.bartosh@uva.nl}
\icmlcorrespondingauthor{Dmitry Vetrov}{dvetrov@constructor.university}
\icmlcorrespondingauthor{Christian A. Naesseth}{c.a.naesseth@uva.nl}

% You may provide any keywords that you
% find helpful for describing your paper; these are used to populate
% the "keywords" metadata in the PDF but will not be shown in the document
\icmlkeywords{diffusion, generative models, variational inference}

\vskip 0.3in
]

% this must go after the closing bracket ] following \twocolumn[ ...

% This command actually creates the footnote in the first column
% listing the affiliations and the copyright notice.
% The command takes one argument, which is text to display at the start of the footnote.
% The \icmlEqualContribution command is standard text for equal contribution.
% Remove it (just {}) if you do not need this facility.

\printAffiliationsAndNotice{}  % leave blank if no need to mention equal contribution
% \printAffiliationsAndNotice{\icmlEqualContribution} % otherwise use the standard text.

\begin{abstract}
The Latent Stochastic Differential Equation (SDE) is a powerful tool for time series and sequence modeling. However, training Latent SDEs typically relies on adjoint sensitivity methods, which depend on simulation and backpropagation through approximate SDE solutions, which limit scalability. In this work, we propose SDE Matching, a new simulation-free method for training Latent SDEs. Inspired by modern Score- and Flow Matching algorithms for learning generative dynamics, we extend these ideas to the domain of stochastic dynamics for time series and sequence modeling, eliminating the need for costly numerical simulations. Our results demonstrate that SDE Matching achieves performance comparable to adjoint sensitivity methods while drastically reducing computational complexity.
\end{abstract}

\glsresetall

\section{Introduction}
\label{sec:introduction}

\glsunsetall
\begin{table}[th]
\caption{Asymptotic complexity comparison. $L$ and $R$ are number of sequential and parallel evaluations of drift/diffusion terms, respectively, and $D$ is the number of parameters/states.}
\label{tab:complexity}
\centering
\setlength\extrarowheight{-7pt}
\begin{tabular}{lcc}
\toprule
    \textbf{Method} & \textbf{Memory} & \textbf{Time} \\ \midrule
    \makecell[l]{
        Forward Pathwise \\
        \citep{yang1991monte} \\
        \citep{gobet2005sensitivity}
    } & $\O(1)$ & $\O(LD)$ \\ \\
    \makecell[l]{
        Backprop through Solver \\
        \citep{giles2006smoking}
    } & $\O(L)$ & $\O(L)$ \\ \\
    \makecell[l]{
        Stochastic Adjoint \\
        \citep{li2020scalable}
    } & $\O(1)$ & $\O(L\log L)$ \\ \\
    \makecell[l]{
        Amortized Reparameterization \\
        \citep{course2024amortized}
    } & $\O(R)$ & $\O(R)$ \\ \\
    \makecell[l]{
        \textbf{\gls{sde} Matching} \\
        (this paper)
    } & $\O(1)$ & $\O(1)$
    \\
\bottomrule
\end{tabular}
\end{table}
\glsresetall

Differential equations are a natural choice for modeling continuous-time dynamical systems and have recently received significant interest in machine learning. Since \citet{chen2018neural} introduced the adjoint sensitivity method for learning \glspl{ode} in a memory-efficient manner, \gls{ode}-based approaches became popular in deep learning for density estimation \cite{grathwohl2018scalable} and to model irregularly observed time series \cite{yildiz2019ode2vae, rubanova2019latent}.

However, \glspl{ode} describe deterministic systems and encode all uncertainty into their initial conditions. This limits the applicability of \gls{ode}-based approaches when modeling stochastic and chaotic processes. To address these limitations, several works study Neural (or Latent) \glspl{sde} \citep{movellan2002monte,ha2018adaptive,tzen2019neural,peluchetti2020infinitely,hodgkinson2020stochastic}. Latent \glspl{sde} have been shown to be more robust to data shifts \citep{oh2024stable}, and applications are numerous, from neuroscience \citep{elgazzar2024generative} to video prediction \citep{daems2024variational}.

To enable more memory-efficient learning, \citet{li2020scalable} extended the adjoint sensitivity method from \glspl{ode} to \glspl{sde}.
Despite these advancements, training of \gls{ode} and \gls{sde} models is \emph{\gls{simbased}}, it relies on costly numerical integration of differential equations and backpropagation through the solutions. With training algorithms that are difficult to parallelize on modern hardware, Latent \glspl{sde} have till now resisted truly scaling.

In parallel, Score Matching \citep{ho2020denoising, song2021scorebased} and Flow Matching methods \citep{lipman2023flow, albergo2023stochastic, liu2023flow} have demonstrated that continuous-time dynamics for generative modeling can be learned efficiently in a \emph{\gls{simfree}} manner—without requiring numerical integration. These techniques have proven computationally efficient and scalable for high-dimensional problems. Inspired by these developments, we develop a \gls{simfree} approach for learning \glspl{sde}.

We introduce \gls{sde} Matching—a \gls{simfree} framework for learning Latent \gls{sde} models. The key idea we adopt from matching-based approaches is direct access to samples from the latent \emph{posterior} approximation at any time step. This eliminates the need to integrate \glspl{sde} during training. The \gls{sde} Matching objective is estimated using the Monte Carlo method, achieving $\O(1)$ memory and time complexity. We summarize the complexity comparisons in \cref{tab:complexity}.

We summarize our contributions as follows:
\begin{enumerate}
    \item We establish a clear connection between diffusion models and Latent \glspl{sde}, which motivates the development of a more efficient, simulation-free training procedure.

    \item We propose an efficient parameterization of the Latent \gls{sde} \gls{ptproc} and use it to develop \gls{sde} Matching, a \gls{simfree} training procedure for Latent and Neural \glspl{sde}.

    \item We demonstrate that \gls{sde} Matching achieves comparable performance to the adjoint sensitivity method on both synthetic and real data, while significantly reducing the computational cost of training and improving convergence. 

    \item We show that \gls{sde} Matching enables the application of Latent \glspl{sde} to high-dimensional problems, where training with the adjoint sensitivity method is computationally infeasible.
\end{enumerate}

In \cref{sec:background}, we provide background on the Latent \gls{sde} model. In \cref{sec:diffusion}, we discuss the connection between Latent \glspl{sde} and matching-based methods, which serve as motivation for \gls{sde} Matching. Then, in \cref{sec:method}, we introduce the parameterization of the Latent \gls{sde} \gls{ptproc} and formulate the \gls{sde} Matching training algorithm. In \cref{sec:experiments}, we demonstrate that our method achieves comparable performance on both synthetic and real-world data while requiring significantly less computational and memory cost for training. In \cref{sec:related_work} we discuss and contrast \gls{sde} Matching to related work. Finally, we conclude with a discussion of the limitations of our approach and potential directions for future work.

\section{Background}
\label{sec:background}

Consider a series of observations ${\X = {x_{t_1}, x_{t_2}, \dots, x_{t_N}}}$, where each $x_{t_i} \in \setX$ where the space $\setX$ depends on the application. For simplicity of notation, we assume that all time steps $t_i$ belong to the interval $[0, 1]$. The extension to intervals of arbitrary lengths is straightforward.

The Latent \gls{sde} model assumes that this series is generated constructively by the following stochastic process, referred to as the \gls{prproc}. First, sample a latent variable ${z_0 \in \R^D}$ from the initial prior distribution $p_\th(z_0)$. Next, infer the latent continuous dynamics $\Z = {(z(t))}_{t \in [0, 1]}$ by integrating the following \gls{sde}: 
\begin{align}
    d z_t &= h_\th(z_t, t) d t + g_\th(z_t, t) d \w,
    \label{eq:background_prior_sde}
\end{align}
where $h_\th : \R^D \times [0, 1] \mapsto \R^D$ is the drift term, $g_\th(z_t, t): \R^D \times [0, 1] \mapsto \R^{D \times D}$ is the diffusion term, and $\w$ is a standard Wiener process. The \gls{sde} in \cref{eq:background_prior_sde}, together with the prior distribution $p_\th(z_0)$, defines a sequence of probability distributions $p_\th(z_t)$ at each time step $t \in [0, 1]$. Once the trajectory of the latent variables $\Z$ is sampled, we independently sample observations $x_{t_i}$ from the conditional distributions $p_\th(x_{t_i}|z_{t_i})$ for each time step $t_i$.

The goal of the Latent \gls{sde} is to find a set of parameters $\th$ that best fits a dataset of observed time series or sequences. Unfortunately, the posterior distribution of the latent variables, $p_\th(\Z | \X)$, is generally intractable. However, variational inference can be used to train the Latent \gls{sde}. Similar to the \gls{prproc}, we introduce an approximate \gls{ptproc}, which is conditioned on the observations $\X$. The \gls{ptproc} consists of two components: the initial posterior distribution $q_\ph(z_0|\X)$ and a conditional \gls{sde}: 
\begin{align}
    d z_t &= f_\ph(z_t, t, \X) d t + g_\th(z_t, t) d \w,
    \label{eq:background_posterior_sde}
\end{align}
where $f_\ph(z_t, t, \X)$ defines the drift term of the \gls{sde} conditionally on the observations $\X$. It is important to note that, despite having a different drift term, the posterior shares the same diffusion term $g_\th(z_t, t)$ as in \cref{eq:background_prior_sde}.

The training objective of the Latent \gls{sde} is a variational bound on the log-marginal likelihood of the observations: 
\begin{align}
    -\log p_\th(\X) & \leq \L = \Lprior + \Ldiff + \Lrec 
    \label{eq:background_loss} \\
    \Lprior & = \KL \big( q_\ph(z_0|\X) \| p_\th(z_0) \big) \\
    \Ldiff & = \underset{q_\ph(\Z|\X)}{\E} \left[ \int_0^1 \frac{1}{2} \left\| r_{\th,\ph}(z_t, t, \X) \right\|_2^2 dt \right] 
    \label{eq:background_l_diff} \\
    \Lrec & = \underset{q_\ph(\Z|\X)}{\E} \left[ \sum_{i = 1}^N - \log p_\th(x_{t_i}|z_{t_i}) \right],
\end{align}
where $r_{\th,\ph}(z_t, t, \X)$ satisfies
\begin{align}
    g_\th(z_t, t) r_{\th,\ph}(z_t, t, \X) = h_\th(z_t, t) -  f_\ph(z_t, t, \X).
\end{align}

Obtaining an estimate of the objective $\L$ for stochastic optimization requires joint numerical integration of the \gls{ptproc} \gls{sde}~(\cref{eq:background_posterior_sde})  and the integral in \cref{eq:background_l_diff}. While there exist methods to improve the memory and computational efficiency of numerical integration, most are \gls{simbased} and require backpropagation through numerical solutions of \glspl{sde}. Besides being computationally expensive, these methods are difficult to parallelize and suffer from numerical instabilities. Altogether, these challenges make training the Latent \gls{sde} a difficult task.

\section{Diffusion Models as Latent \glspl{sde}}
\label{sec:diffusion}

In contrast to Latent \glspl{sde}, recent advancements in diffusion and flow-based modeling demonstrate that continuous dynamics can be learned efficiently in a \gls{simfree} manner, without requiring numerical integration. At first glance, these generative models seem quite different from Latent \glspl{sde}. Instead of generating data autoregressively, they produce a single data point through an iterative refinement process, gradually reconstructing corrupted data. However, similar to Latent \glspl{sde}, continuous diffusion models can be thought of as learning an \gls{sde} in a latent space. This connection is useful for understanding and developing \gls{sde} Matching.

Conventional diffusion models are defined through two processes: the forward (or noising) process and the reverse (or generative) process. The forward process takes a data point $x \in \R^D$ and perturbs it over time by injecting noise. This dynamic can be expressed as an \gls{sde} with a linear drift term and state-independent diffusion term:
\begin{align}
    d z_t &= f(t) z_t d t + g(t) d \w,
    \label{eq:diff_f_sde}
\end{align}
where $f : [0, 1] \mapsto \R$, $g : [0, 1] \mapsto \R_+$, and $q(z_0|x) \approx \d(z_0 - x)$. Due to the linearity of \cref{eq:diff_f_sde}, the conditional marginal distribution is available in closed form $q(z_t|x) = \N(z_t; \a_t x, \s_t^2 I)$, where $\a_t, \s_t$ are determined by $f(t), g(t)$. The functions $f(t)$ and $g(t)$ are typically chosen to ensure that $q(z_1|x) \approx \N(z_1;0, I)$. The generative process then reverses this transformation, starting from the prior $p(z_1) = \N(z_1;0, I)$ and following the (marginal) reverse \gls{sde}:
\begin{align}
    d z_t = [ f(t) z_t - g^2(t) \n_{z_t} \log q(z_t) ] d t + g(t) d \bw.
    \label{eq:diff_marginal_f_rsde}
\end{align}
Here, $\bw$ denotes a standard Wiener process, where time flows backward. Diffusion models approximate this reverse process by learning the score function $\n_{z_t} \log q(z_t)$ using a denoising score matching loss:
\begin{align}
    \underset{u(t) q(z_t|x)}{\E} \left[ \big\| s_\th(z_t, t) - \n_{z_t} \log q(z_t|x) \big\|_2^2 \right],
    \label{eq:diff_loss}
\end{align}
where $u(t)$ represents a uniform distribution over the interval $[0, 1]$, and $s_\th: \R^d \times [0, 1] \mapsto \R^d$ is a learnable approximation of the score function.

A key property of the denoising score matching objective is that it can be learned in a \gls{simfree} manner. Due to the simplicity of the forward process, instead of integrating the \gls{sde} in \cref{eq:diff_f_sde}, we can directly sample from $q(z_t|x)$ and estimate the expectation in \cref{eq:diff_loss} using the Monte Carlo method. This property distinguishes diffusion models from Latent \glspl{sde}. However, as we will see diffusion models are in fact a special case of Latent \glspl{sde}.

To demonstrate this connection we: (1) derive the reverse \gls{sde} for the conditional forward process~(\cref{eq:diff_f_sde}), and (2) invert the time direction for the entire model, mapping the time interval $[0,1]$ into $[1,0]$.

First, the reverse \gls{sde} for the forward process can be obtained from the Fokker–Planck equation, similar to the derivation of the marginal reverse \gls{sde} in \cref{eq:diff_marginal_f_rsde}.

Second, after inverting the time direction, we obtain:
\begin{align}
    d z_t = [ f(t) z_t + g^2(t) \n_{z_t} \log q(z_t|x) ] d t + g(t) d \w.
    \label{eq:diff_f_rsde}
\end{align}

Except for the change in time direction, the generative process remains unchanged:
\begin{align}
    d z_t = [ f(t) z_t + g^2(t) s_\th(z_t, t) ] d t + g(t) d \w.
    \label{eq:diff_r_sde}
\end{align}

From this perspective, diffusion models can be seen as a special case of Latent \glspl{sde} with a specific form of the \gls{ptproc} and only a single observation $x$ at time step $t=1$. Moreover, substituting this parameterization of the prior and reverse processes into the Latent \gls{sde} objective in \cref{eq:background_loss}, we find that it corresponds to a reweighted denoising score matching objective:
\begin{align}
    & \underset{q(\Z|x)}{\E} \bigg[ \int_0^1 \frac{1}{ 2 g^2(t) } \Big\| 
        \cancel{ f(t) z_t } + g^2(t) s_\th(z_t, t) \nonumber \\
        & \quad \quad \quad - \cancel{ f(t) z_t } - g^2(t) \n_{z_t} \log q(z_t|x) 
    \Big\|_2^2 dt \bigg] + C \\
    & = \underset{u(t) q(z_t|x)}{\E} \left[ \frac{g^2(t)}{2} \big\| s_\th(z_t, t) - \n_{z_t} \log q(z_t|x) \big\|_2^2 \right] + C.
\end{align}

This result is particularly meaningful since it is well known that denoising score matching~(\cref{eq:diff_loss}) corresponds to a reweighted variational bound on the likelihood of diffusion models \cite{song2021maximum}.

This connection between diffusion models and Latent \glspl{sde}, along with the fact that diffusion models can be trained in a \gls{simfree} manner, motivates us to extend and develop a \gls{simfree} framework for training Latent \glspl{sde}.

\section{SDE Matching}
\label{sec:method}

Diffusion models can be trained in a \gls{simfree} manner because they do not require full simulation of the noising process. Instead, they allow direct sampling of latent variables $z_t$ from the marginal distribution $q(z_t|x)$. In contrast, Latent \glspl{sde} are often trained using a  \gls{ptproc} parameterized by a general-form \gls{sde}. This renders the marginal posterior distribution $q_\ph(z_t|\X)$ intractable, which prevents \gls{simfree} training.

To address this limitation, we propose \gls{sde} Matching -- a framework for \gls{simfree} training of Latent \gls{sde} models. The key idea behind \gls{sde} Matching is to parameterize the \gls{ptproc}' conditional \gls{sde} via a learnable function $F_\ph(\e, t, \X)$ that directly defines the marginal distributions $q_\ph(z_t|\X)$. This design inherently allows direct sampling of latent variables without numerical integration. Importantly, \gls{sde} Matching simplifies only the training procedure while leaving the generative dynamics of the Latent \gls{sde} \gls{prproc} fully flexible.

\subsection{Generative Model}

In \gls{sde} Matching, the \gls{prproc} is identical to the standard Latent \gls{sde} model~(see \cref{sec:background}):
\begin{align}
    d z_t &= h_\th(z_t, t) d t + g_\th(z_t, t) d \w.
    \label{eq:prior_sde}
\end{align}

In general, the functions $h_\th$ and $g_\th$ may be parameterized using neural networks of arbitrary form. However, the choice of parameterization involves some trade-offs, which we will discuss later.

Observations $x_{t_i}$ are likewise assumed to be sampled conditionally independently from the likelihood distributions ${x_{t_i} \sim p_\th(x_{t_i}|z_{t_i})}$ for each time step $t_i$.

\subsection{Posterior Process}
\label{sec:posterior_process}

Instead of defining the \gls{ptproc} through a conditional \gls{sde}~(\cref{eq:background_posterior_sde}) and then deriving its analytical solution, we propose an alternative approach. We first define the conditional marginal distribution of latent variables $q_\ph(z_t|\X)$ for each $t$, and then derive the corresponding conditional \gls{sde} with the desired marginal distributions.

We construct the posterior process by following  three steps:
\begin{enumerate}
    \item Define the marginal distribution implicitly through a parameterized function of noise;
    \item Construct a conditional \gls{ode} that satisfies the target marginals;
    \item Transition from the \gls{ode} to a \gls{sde} while preserving the marginal distributions.
\end{enumerate}

A detailed description of each step with proofs is provided in \cref{app:posterior}.

\textbf{Posterior Marginal Distribution.} We define the conditional marginal distribution $q_\ph(z_t|\X)$ implicitly. First, we introduce a function that, given time $t$ and observations $\X$, transforms noise $\e$ into a latent variable $z_t$:
\begin{align}
    z_t = F_\ph(\e, t, \X),
    \label{eq:f}
\end{align}
where $\e \sim q(\e) = \N(\e; 0, I)$. This implicitly defines the conditional distribution of latent variables $q_\ph(z_t|\X)$. Although, in general, we do not have an explicit form for this distribution, the above definition inherently enables efficient sampling. The specific parameterization of $F_\ph$ and the dependence of $z_t$ on observations $\X$ are user design choices.

\textbf{Conditional \gls{ode}.} We assume that $F_\ph$ is differentiable with respect to $\e$ and $t$, and invertible with respect to $\e$. Fixing observations $\X$ and noise $\e$, and varying $t$ from $0$ to $1$, results in a smooth trajectory from $z_0$ to $z_1$. Differentiating these trajectories over time yields a velocity field that generates the conditional distribution $q_\ph(z_t|\X)$, the conditional \gls{ode}:
\begin{align}
    d z_t & = \barf_\ph(z_t, t, \X) d t, 
    \quad \textrm{where} \quad \label{eq:posterior_ode} \\
    \barf_\ph(z_t, t, \X) & = \left. \frac{\p F_\ph(\e, t, \X)}{\p t} \right|_{\e=F_\ph^{-1}(z_t, t, \X)}.
\end{align}

Thus, if we sample $z_0 \sim q_\ph(z_0|\X)$ and solve \cref{eq:posterior_ode} up to time $t$, we obtain a sample $z_t \sim q_\ph(z_t|\X)$.

The time derivative of $F_\ph$ can be computed efficiently using forward-mode differentiation in modern frameworks such as PyTorch \cite{paszke2017automatic} or JAX \cite{jax2018github}.

\textbf{Conditional \gls{sde}}. Finally, we derive the conditional \gls{sde} that defines the \gls{ptproc}.

Given access to both the conditional \gls{ode} and the score function $\n_{z_t} \log q_\ph(z_t|\X)$, we follow \citet{song2021scorebased} and define a conditional \gls{sde} with marginal distributions $q_\ph(z_t|\X)$ as follows:
\begin{align}
    d z_t = f_{\th,\ph}(z_t, t, \X) &d t  + g_\th(z_t, t) d \w, \quad \textrm{where} 
    \label{eq:posterior_sde} \\
    f_{\th,\ph}(z_t, t, \X) &= 
    \label{eq:posterior_sde_drift}
         \barf_\ph(z_t, t, \X) \\
        + &\frac{1}{2} g_\th(z_t, t) g^\top_\th(z_t, t) \n_{z_t} \log q_\ph(z_t|\X) \nonumber \\
        + &\frac{1}{2} \n_{z_t} \cdot \left[ g_\th(z_t, t) g^\top_\th(z_t, t) \right]. \nonumber
\end{align}
Here, the diffusion term $g_\th$ is the matrix-valued function from the \gls{prproc}. It is crucial that the \gls{ptproc} has the same diffusion term as the \gls{prproc} to ensure that the variational bound is finite. Notably, $g_\th$ affects only the distribution of trajectories ${z(t)}_{t \in [0, 1]}$, while the marginal distributions $q_\ph(z_t|\X)$ do not depend on $g_\th$.

We would like to emphasis that the correspondence between the posterior \gls{sde} in \cref{eq:posterior_sde} and its marginals $q_\ph(z_t|\X)$ is exact, not approximate.

Evaluating the drift term in \cref{eq:posterior_sde} presents several challenges. First, it requires access to the conditional score function $\n_{z_t} \log q_\ph(z_t|\X)$, which can be computationally expensive. However, for $F_\ph$ functions that enable efficient log-determinant computation of the Jacobian matrix, the score function can be computed efficiently (e.g., functions linear in $\e$ or RealNVP architectures \cite{dinh2017density, kingma2018glow}). The calculation of this score function is further discussed in \cref{app:parameterization}.

The second challenge involves computing the last term in \cref{eq:posterior_sde_drift}, which can be expensive in high-dimensional settings. However, it can be computed efficiently if for example the diffusion term $g_\th$ is of the form $\sigma_\th(z_t,t) \Gamma_\th(t)$ for scalar-valued $\sigma_\th$ and matrix-valued $\Gamma_\th$. Another option is if $g_\th$ is a diagonal matrix, where each element $g_\th(z_t, t)_{k,k}$ depends only on the $k$-th coordinate of $z_t$. Furthermore, if $g_\th$ does not depend on the state $z_t$, this term vanishes. Notably, while this structure limits flexibility, it is identical to constraints in memory-efficient implementations of standard Latent \gls{sde} training \cite{li2020scalable}, which means that \gls{sde} Matching does not introduce additional restrictions.

\subsection{Optimization}

\begin{algorithm}[!t]
\caption{Training of \gls{sde} Matching}
\label{alg:training}
\begin{algorithmic}
    \REQUIRE $\X$, $p_\th(z_0)$, $h_\th$, $g_\th$, $p_\th(x_t|z_t)$, $F_\ph$
    \FOR{learning iterations}
        \STATE {\color{gray} $\#$ Calculation of prior loss $\Lprior$}
        \STATE $\e_0 \sim \N(0, I)$
        \STATE $z_0 = F_\ph(\e_0, 0, \X)$
        \STATE $\Lprior = \KL \big( q_\ph(z_0|\X) \| p_\th(z_0) \big)$
        \STATE {\color{gray} $\#$ Calculation of diffusion loss $\Ldiff$}
        \STATE $t \sim u(t)$
        \STATE $\e_t \sim \N(0, I)$
        \STATE $z_t = F_\ph(\e_t, t, \X)$
        \STATE $r_{\th,\ph}(z_t, t, \X) = g^{-1}_\th(z_t, t) \left( h_\th(z_t, t) -  f_{\th,\ph}(z_t, t, \X) \right)$
        \STATE $\Ldiff = \frac{1}{2} \left\| r_{\th,\ph}(z_t, t, \X) \right\|_2^2$
        \STATE {\color{gray} $\#$ Calculation of reconstruction loss $\Lrec$}
        \STATE $i \sim u(i)$
        \STATE $\e_{t_i} \sim \N(0, I)$
        \STATE $z_{t_i} = F_\ph(\e_{t_i}, t_i, \X)$
        \STATE $\Lrec = - N \log p_\th(x_{t_i}|z_{t_i})$
        \STATE {\color{gray} $\#$ Optimization}
        \STATE $\L = \Lrec + \Ldiff + \Lprior$
        \STATE Gradient step on $\th$ and $\ph$ w.r.t. $\L$
    \ENDFOR
\end{algorithmic}
\end{algorithm}

Like standard Latent \gls{sde} training, \gls{sde} Matching optimizes the parameters $\th$ and $\ph$ end-to-end by optimizing the same variational objective. However, the training procedure of \gls{sde} Matching can be made \gls{simfree} by leveraging the above \gls{ptproc} and rewriting the objective from \cref{eq:background_loss} as follows:
\begin{align}
    -\log p_\th(\X) & \leq \L = \Lprior + \Ldiff + \Lrec 
    \label{eq:loss} \\
    \Lprior & = \KL \big( q_\ph(z_0|\X) \| p_\th(z_0) \big) \\
    \Ldiff & = \underset{u(t)q_\ph(z_t|\X)}{\E} \left[ \frac{1}{2} \left\| r_{\th,\ph}(z_t, t, \X) \right\|_2^2 \right] 
    \label{eq:l_diff} \\
    \Lrec & = N \underset{u(i)q_\ph(z_{t_i}|\X)}{\E} \left[ - \log p_\th(x_{t_i}|z_{t_i}) \right],
\end{align}
where $u(t)$ represents a uniform distribution over the interval $[0, 1]$, $u(i)$ represents a uniform distribution over the set ${1, \dots, N}$, and $r_{\th,\ph}(z_t, t, \X)$ satisfies:
\begin{align}
    g_\th(z_t, t) r_{\th,\ph}(z_t, t, \X) = h_\th(z_t, t) -  f_{\th,\ph}(z_t, t, \X).
\end{align}

The objective in \cref{eq:loss} consists of three terms: the prior loss $\Lprior$, the diffusion loss $\Ldiff$, and the reconstruction loss $\Lrec$. The optimization of $\Lprior$ with respect to parameters $\th$ can be omitted if we are not interested in unconditional sampling or in learning the initial prior $p_\th(z_0)$—for example, if we are solely interested in forecasting. Otherwise, this term can still be optimized efficiently.

The other two terms, $\Ldiff$ and $\Lrec$, which previously required the numerical integration of the conditional \gls{sde}, can now be estimated more efficiently. Specifically, for each term, we can first sample a timestep $t$, then sample $z_t \sim q_\ph(z_t|\X)$, and finally evaluate the corresponding function. Importantly, \gls{sde} Matching allows the estimation of $\Ldiff$ and $\Lrec$ with an arbitrary number of samples evaluated in parallel. It also enables inference of the \gls{ptproc}~(\cref{eq:posterior_sde}) as a special case. However, we found that using a single sample was sufficient for our experiments. We summarize the training procedure in \cref{alg:training}.

Note that while we use the functions $\barf_\ph$ (\cref{eq:posterior_ode}) and $f_{\th,\ph}$ (\cref{eq:posterior_sde}) to describe the conditional \gls{sde} in the \gls{ptproc}, these functions are ultimately defined by the reparameterization function $F_\ph$~(\cref{eq:f}) and $g_\th$, which are the functions we actually parameterize.

The full Bayesian treatment is also possible as a straightforward extension of the bound in \cref{eq:loss} for a given prior with a separate variational approximation for $\th$.

We would like to emphasize that the \gls{sde} Matching objective is exactly the same variational bound as used in the adjoint sensitivity method. The key difference lies in the estimation strategy for the objective. We discuss the tightness of the variational bound in \cref{eq:loss} in \cref{app:theory} and empirically validate convergence to the true posterior for a linear \gls{sde} in \cref{app:convergence}.

\subsection{Sampling}

Unconditional sampling at inference- or test-time for \gls{sde} Matching follows the same procedure as sampling from a Latent \gls{sde}. First, we sample an initial latent variable $z_0 \sim p_\th(z_0)$ and then integrate the unconditional \gls{sde} in \cref{eq:prior_sde} using any off-the-shelf \gls{sde} solver. Then, at each desired time step $t$, we project the latent states $z_t$ to the data space by sampling $x_t$ from the distribution $p_\th(x_t|z_t)$.

However, if we have access to partial observations $\X = {x_{t_1}, x_{t_2}, \dots, x_{t_N}}$ and are interested in forecasting for $t > t_N$, we can instead first sample the latent state $z_{t_N}$ from $q_\ph(z_{t_N}|\X)$ and then integrate the \gls{prproc} dynamics using this sample as initial condition. This follows because the Latent \gls{sde} is Markovian with observations that are conditionally independent given the latent state. Importantly, because \gls{sde} Matching provides direct access to an approximation to the posterior (smoothing) marginals it can be used for \gls{simfree} sampling (inference) of the latent state, whereas conventional parameterization of the \gls{ptproc} would require simulating the conditional \gls{sde} up to time step $t_N$ first.

Similarly, interpolation can be performed by inferring only the \gls{ptproc} dynamics. Alternatively, for more general conditioning events and steering one could leverage \gls{smc} methods \citep{naesseth2019elements,chopin2020introduction,wu2023practical} for conditional sampling, but a detailed investigation of these approaches is beyond the scope of this work.

Due to the strong connection between \gls{sde} Matching and diffusion models—and given that diffusion models have demonstrated exceptional performance in deterministic sampling—it is natural to ask whether it is possible to sample deterministically from \gls{sde} Matching. Indeed, the \gls{prproc} and the initial distribution $p_\th(z_0)$ together define a sequence of marginal probability distributions $p_\th(z_t)$. While it is possible to learn an \gls{ode} corresponding to $p_\th(z_t)$, it is important to clarify that solving this \gls{ode} does not necessarily produce meaningful time-series samples.

Although the \gls{ode} preserves marginal distributions, it alters the joint distribution of latent variable trajectories. The same applies to the observed variables: while the marginal distributions remain correct, the joint distribution becomes distorted. Therefore, using an \gls{ode} for deterministic sampling of time series data does not generally preserve the correct structure of the generated trajectories.

\section{Experiments}
\label{sec:experiments}

\begin{figure}[!t]
    \centering
    \parbox{.25\textwidth}{
        \begin{subfigure}{\linewidth}
            \includegraphics[width=\textwidth]{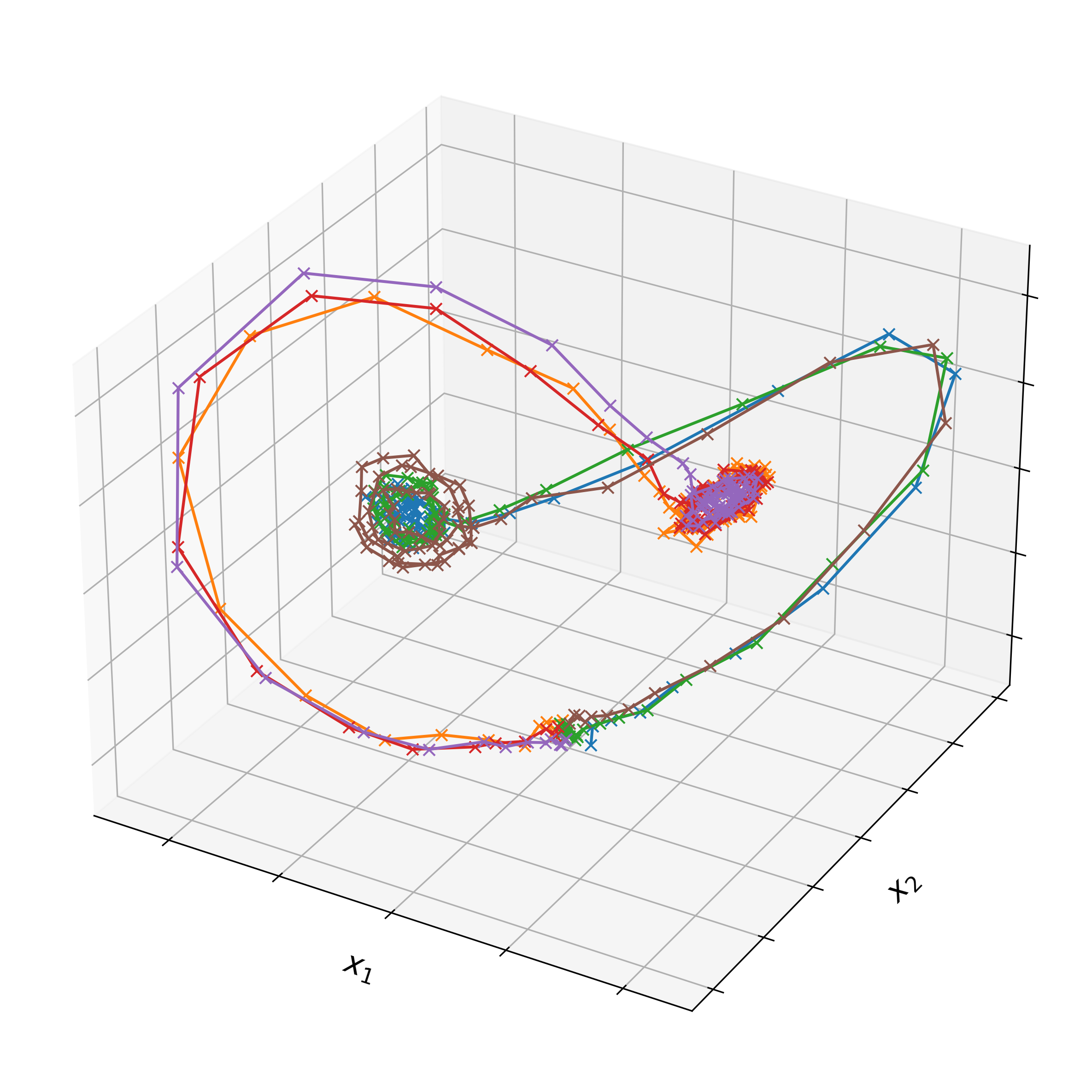}
            \caption{Training data distribution}
        \end{subfigure}
    }
    \parbox{.235\textwidth}{
        \begin{subfigure}{\linewidth}
            \includegraphics[width=\textwidth]{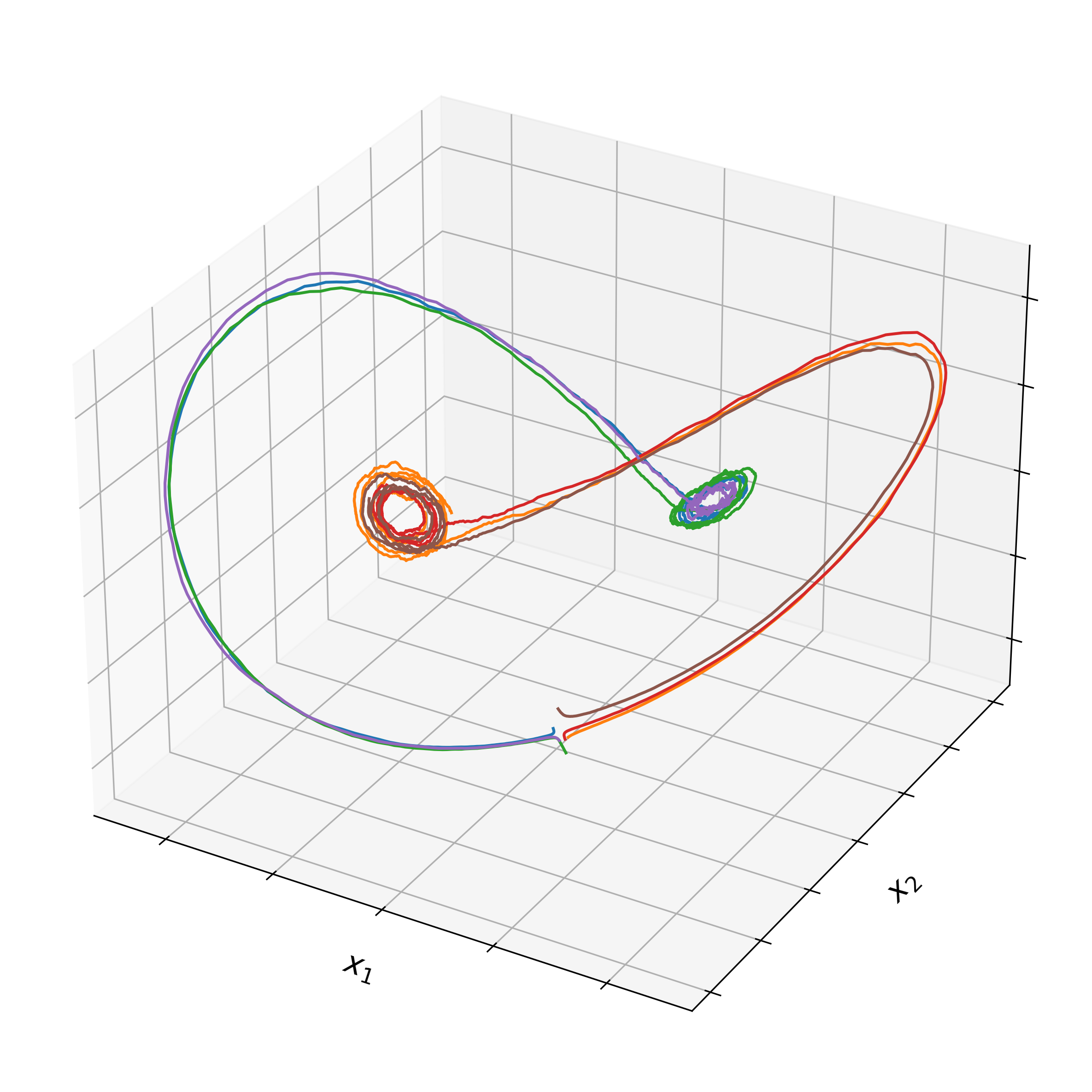}
            \caption{Adjoint sensitivity method}
        \end{subfigure}
    }
    \parbox{.235\textwidth}{
        \begin{subfigure}{\linewidth}
            \includegraphics[width=\textwidth]{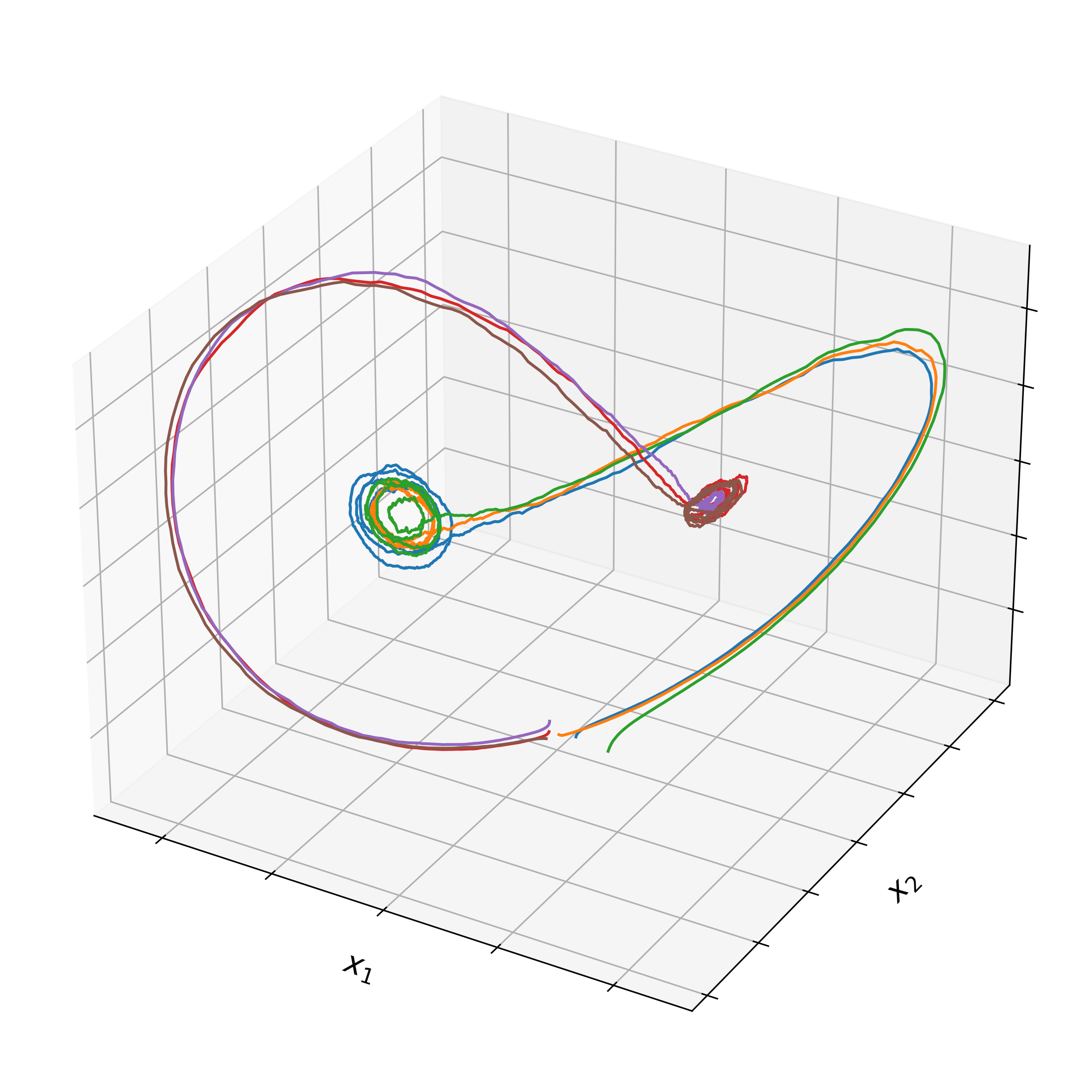}
            \caption{\gls{sde} Matching}
        \end{subfigure}
    }
    \caption{Training data distribution and learned dynamics from a 3D stochastic Lorenz attractor. Results for Latent \gls{sde} trained with adjoint sensitivity method (\textit{bottom left}) and \gls{sde} Matching (\textit{bottom right}). Both methods successfully learn the underlying dynamics.}
    \label{fig:synthetic}
\end{figure}

\begin{figure}[!t]
    \centering
    \centerline{\includegraphics[width=.49\textwidth]{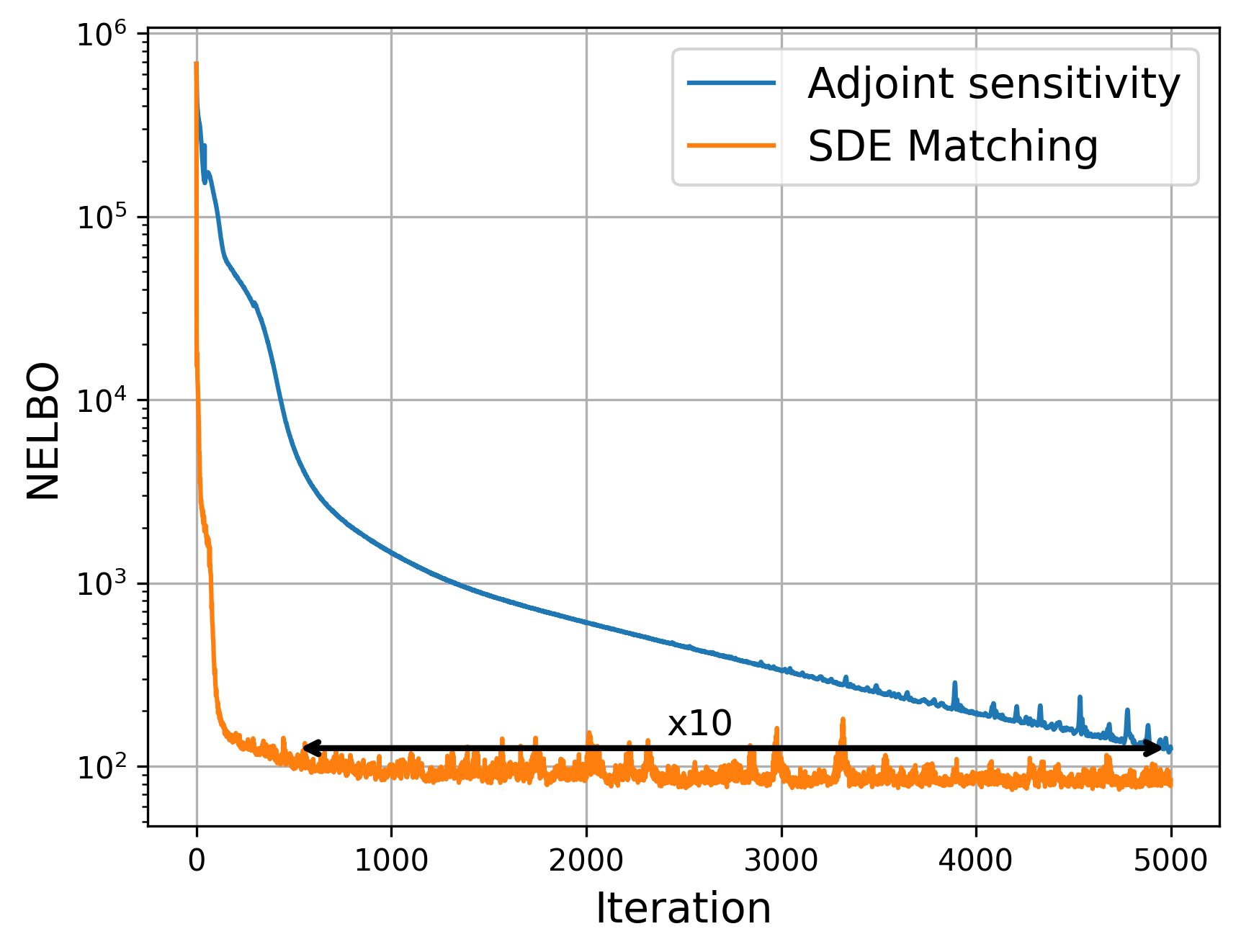}}
    \caption{\gls{nelbo}~(\cref{eq:loss}) objective with respect to iteration step on 3D stochastic Lorenz attractor data. Results for Latent \gls{sde} trained with \textcolor[HTML]{1F77B4}{adjoint sensitivity method} and \textcolor[HTML]{ff7f0e}{\gls{sde} Matching}. \gls{sde} Matching demonstrates approximatly $10$ times faster convergence in terms of iterations.}
    \label{fig:convergence}
\end{figure}

\gls{sde} Matching is a \gls{simfree} framework for training Latent \gls{sde} models. Therefore, the goal of this section is to demonstrate that \gls{sde} Matching indeed enables computationally efficient training of Latent \glspl{sde}. Our objective is not to outperform all other Latent \gls{sde}-based approaches for time series modeling. Instead, \gls{sde} Matching is compatible with existing extensions and can be combined with them for further improvements.

To this end, we provide various experiments on both synthetic and real data. In all experiments, we use the same hyperparameters for both the conventional parameterization of Latent \gls{sde} and the parameterization described in \cref{sec:method}. Additional details on parameterization and training is provided in \cref{app:implementation}. When using the \gls{sde} Matching training procedure, we jointly optimize the parameters of the generative model $\th$ in the prior $p_\th(z_0)$, drift $h_\th(z_t, t)$, diffusion $g_\th(z_t, t)$, and observation model $p_\th(x_t|z_t)$, as well as the parameters $\ph$ of the posterior reparameterization function $F_\ph$. All parameters are optimized jointly by minimizing the variational bound in \cref{eq:loss}. We do not apply annealing or additional regularization techniques during training.

The experiments demonstrate that \gls{sde} Matching achieves comparable or better accuracy than simulation-based Latent \gls{sde} training, while significantly reducing computational complexity. Not only does \gls{sde} Matching reduce the computational cost of each training iteration, the parameterization of the \gls{ptproc} also leads to faster convergence in terms of the number of iterations for further gains compared to alternatives. The findings suggest that \gls{sde} Matching enhances the scalability of Latent \glspl{sde} to longer and higher-dimensional time series.

The code is available at \url{https://github.com/GrigoryBartosh/sde_matching}.

\subsection{Synthetic Datasets}
\label{sec:lorenz}

For the synthetic data experiment, we consider the 3D stochastic Lorenz attractor process from \citet{li2020scalable} with identical observation generation procedure. 

As shown in \cref{fig:synthetic}, both models—one trained using the adjoint sensitivity method and the other with \gls{sde} Matching—successfully learned the underlying dynamics. However, \gls{sde} Matching required only a single evaluation of the drift term in the \gls{ptproc} for each iteration, whereas the adjoint sensitivity method required 100 simulation steps for this experiment. In terms of absolute runtime, a single training iteration with \gls{sde} Matching was approximately five times faster.

Moreover, as demonstrated in \cref{fig:convergence}, \gls{sde} Matching leads to faster convergence of the model. We attribute this to the parameterization of the \gls{ptproc} through the reparameterization function $F_\ph$~(\cref{eq:f}). We hypothesize that since this function directly models the marginal distributions of the \gls{ptproc}, the model can more efficiently learn compared to integrating conditional \glspl{sde}.

The combined, per-iteration and convergence, runtime speed-up is over $50\times$ compared to the adjoint sensitivity approach for this experiment.

To evaluate robustness with respect to time horison, we compare the gradient norms of the objective function with respect to model parameters for both SDE Matching and the adjoint sensitivity method. When integrating over time horizons $T \in {1, 2, 5, 10}$, the adjoint sensitivity method yields the following $\log_{10}$-based gradient norms:  $6.26 \pm 0.14$, $6.95 \pm 0.20$, $7.91 \pm 0.23$ and $8.82 \pm 0.28$. These results demonstrate that the gradient norms grow exponentially with the time horizon for adjoint sensitivity methods.

In contrast, SDE Matching maintains a stable gradient norm of $4.92 \pm 0.24$ (in $\log_{10}$ scale) across all time horizons, indicating significantly greater stability for long time series modeling.

In \cref{app:grads} we provide additional experiments demonstrating robustness of the \gls{sde} Matching training procedure.

\subsection{Motion Capture Dataset}
\label{sec:mocap}

To validate \gls{sde} Matching on real-world data, we follow \citet{li2020scalable,course2024amortized} and evaluate it on the motion capture dataset from \citet{gan2015deep}. This dataset consists of motion recordings from subject 35 walking, represented as 50-dimensional time series with 300 observations each. The dataset is split into 16 training sequences, 3 validation sequences, and 4 test sequences, following the preprocessing method from \citet{wang2007gaussian}.

We use the same hyperparameters as \citet{li2020scalable}, including setting the latent state dimensionality to 6 and keeping the number of training iterations unchanged. In \cref{tab:mocap}, we report the average performance on the training set over 10 models trained from different random seeds. \gls{sde} Matching achieves similar performance to the adjoint sensitivity method while being significantly more computationally efficient.

\begin{table}[!t]
    \caption{Test MSE on motion capture dataset. We report an average performance based on $10$ models trained with deferent random seeds and $95\%$ confidence interval based on t-statistic. The first section of the table contains \gls{simbased} approaches and the second part contains \gls{simfree} approaches.  $^\dagger$results from \cite{yildiz2019ode2vae}, $^*$from \cite{li2020scalable} and $^\ddagger$from \cite{course2024amortized}.}
    \label{tab:mocap}
    \centering
    \begin{tabular}{ll}
        \toprule
        Model & Test MSE $\downarrow$ \\
        \midrule
        DTSBN-S \cite{gan2015deep} & $34.86 \pm 0.02^\dagger$ \\
        npODE \cite{heinonen2018learning} & $22.96^\dagger$ \\
        NeuralODE \cite{chen2018neural} & $22.49 \pm 0.88^\dagger$ \\
        ODE$^2$VAE \cite{yildiz2019ode2vae} & $10.06 \pm 1.4^\dagger$ \\
        ODE$^2$VAE-KL \cite{yildiz2019ode2vae} & $8.09 \pm 1.95^\dagger$ \\
        Latent \gls{ode} \cite{rubanova2019latent} & $5.98 \pm 0.28^*$ \\
        Latent \gls{sde} \cite{li2020scalable} & $\mathbf{4.03 \pm 0.2}^*$ \\
        \midrule
        ARCTA \cite{course2024amortized} & $7.62 \pm 0.93^\ddagger$ \\
        \gls{sde} Matching (ours) & $\mathbf{4.50 \pm 0.32}$ \\
        \bottomrule
    \end{tabular}
\end{table}

\begin{figure}[!t]
    \centering
    \parbox{.49\textwidth}{
        \begin{subfigure}{\linewidth}
            \includegraphics[width=\textwidth]{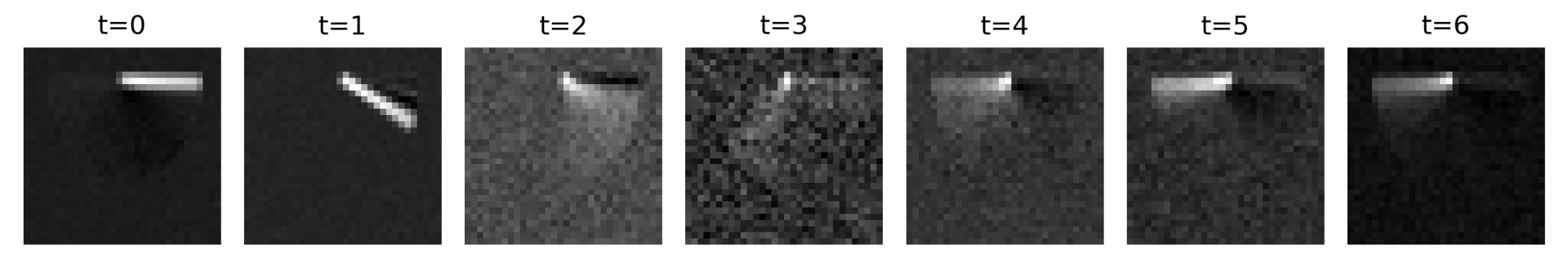}
            \caption{Adjoint sensitivity method}
        \end{subfigure}
    }
    \parbox{.49\textwidth}{
        \begin{subfigure}{\linewidth}
            \includegraphics[width=\textwidth]{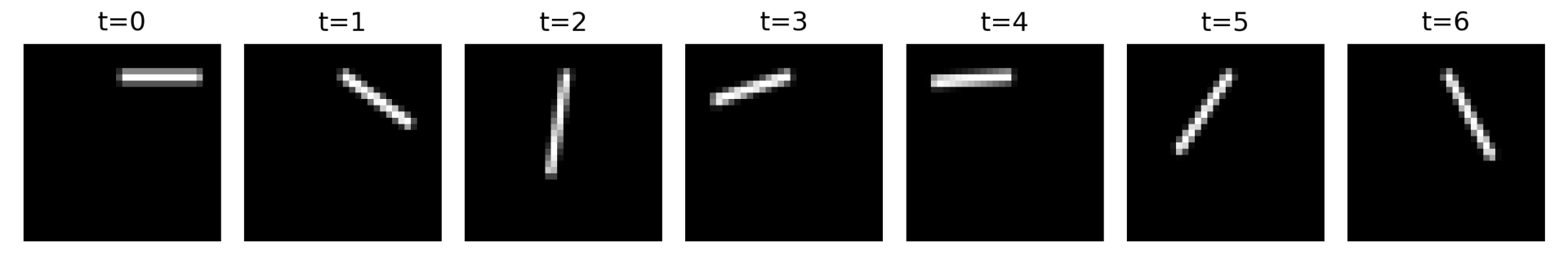}
            \caption{SDE Matching}
        \end{subfigure}
    }
    \caption{Qualitative comparison of generated video sequences depicting a moving pendulum. Samples generated by the model trained with adjoint sensitivity method and SDE Matching with same number of training iterations.}
    \label{fig:video}
\end{figure}

\gls{sde} Matching also outperforms the other \gls{simfree} approach, ARCTA \citet{course2024amortized}, which can be seen as a special case of \gls{sde} Matching. Notably, ARCTA requires drawing around 100 latent samples and evaluations of \glspl{sde} per batch, whereas \gls{sde} Matching requires only one. We attribute this result to the fact that ARCTA employs a less flexible prior and posterior parameterization, where the volatility function $g_\th$ is state-independent and distribution of latent states $q_\ph(z_t|\X)$ depends heavily on observations close to time $t$. This dependence makes it challenging to learn long-term dependencies and limits the ability to propagate learning signals effectively.

An extended discussion on the importance of state-dependent volatility functions $g_\th$, along with experiments on the state-dependent stochastic Lotka-Volterra system, is provided in \cref{app:diffusion_term}.

\subsection{Video Dataset}
\label{sec:video}

Finally, we evaluate the performance of SDE Matching in a high-dimensional setting by modeling sequences of $32 \times 32$ images—i.e. video data—depicting a moving pendulum.

Using an identical number of training iterations ($20$k), SDE Matching successfully learns the underlying dynamics in approximately 20 minutes on a single GPU. In contrast, the adjoint sensitivity method requires around 30 hours for the same number of iterations and still fails to accurately capture the system's dynamics.

Visualizations of the generated sequences are provided in \cref{fig:video}.

\section{Related Work}
\label{sec:related_work}

Differential equations are a widely recognized technique for modeling continuous dynamics. Recently they have seen a surge of interest in machine learning after the introduction of Neural \glspl{ode} by \citet{chen2018neural} and \citet{rubanova2019latent}. Neural \glspl{ode} demonstrated strong performance in time-series modeling compared to recurrent neural networks, particularly for irregularly sampled time series. However, Neural \glspl{ode} have several limitations.

First, they rely on adjoint sensitivity methods for training, which requires numerical integration of gradients and backpropagation through the solutions. Aside from being computationally expensive and difficult to parallelize on modern hardware, \glspl{ode} can sometimes suffer from numerical instabilities \cite{lea2000sensitivity}. Second, \gls{ode}-based models inherently encode all uncertainty into the initial conditions as the dynamics is completely deterministic. This approach is inadequate for modeling fundamentally stochastic processes, especially when observations are sparse or uninformative.

Traditionally, Monte Carlo methods have been used to train \glspl{sde} \citep{movellan2002monte,beskos2006exact,van2017bayesian}. Another line of work \cite{archambeau2007gaussian, archambeau2007variational} has focused on variational inference techniques for inferring the Latent \glspl{sde}'s training objective. Discretization is used by \citet{ryder2018black} to enable more expressive variational approximations, and \citet{ha2018adaptive} draw the connection to stochastic optimal control for more flexibility. 

More recently, \citet{li2020scalable} extended the adjoint sensitivity method from \glspl{ode} to \glspl{sde}. Unlike Neural \glspl{ode}, Latent \glspl{sde} are better suited for modeling inherently stochastic and chaotic processes and are more robust to data shifts.

Many studies have proposed modifications to objective functions, introduced regularized dynamics, and improved computational efficiency and numerical stability for both Neural \glspl{ode} \cite{kelly2020learning, finlay2020train, kidger2021hey} and Latent \glspl{sde} \cite{kidger2021efficient}. However, most of these methods still rely on backpropagation through the numerical solutions of differential equations limiting their scalability. In \citet{verma2024variational} the authors instead rely on a site-based Gaussian approximation and suggest a fixed-point training algorithm.

Closest to our work is \citet{course2024amortized} that develops a method for training Latent \glspl{sde} in a \gls{simfree} manner. This approach is restricted to \glspl{ptproc} with Gaussian marginals and does not support state-dependent diffusion terms in the \gls{prproc}. Additionally, this method assumes that the latent state in the \gls{ptproc} depends only on a few temporally closest observations. As a result, for effective optimization it requires drawing approximately 100 latent samples,  rather than 1, per batch during training. This method can be seen as a special case of \gls{sde} Matching with the corresponding restrictions on the parameterization of the \gls{prproc} and \gls{ptproc}.

Recently, \citet{zhang2024trajectory} proposed a \gls{simfree} technique for time-series modeling. However, this approach learns dynamics directly in the original data space rather than in a latent space. Furthermore, to model non-Markovian processes the authors condition the generative dynamics on a fixed set of past observations, making simulations more computationally expensive and limiting the model's ability to capture long-range dependencies. This can be viewed as a special case of \gls{sde} Matching, where the latent space is constructed by concatenation of the most recent observations.

Another line of research that explores learning stochastic dynamics in a \gls{simfree} manner is diffusion models \cite{ho2020denoising, song2021scorebased}. As we demonstrated in \cref{sec:diffusion}, conventional diffusion models can be seen as a special case of Latent \glspl{sde} with only a single observation. The key property that enables efficient training in conventional diffusion models is their simple and fixed noising process, i.e., the \gls{ptproc}.

Recently \citet{singhal2023where,bartosh2023neural, nielsen2024diffenc, sahoo2024diffusion, bartosh2024neural, du2024doob} have shown that the noising process in diffusion models can be more complex, and even learnable, while still preserving the \gls{simfree} property. These approaches can also be viewed as special cases of \gls{sde} Matching by the re-interpretation of diffusion models as Latent \glspl{sde}.

\section{Limitations and Future Work}
\label{sec:limitations}

The \gls{simfree} properties of \gls{sde} Matching come with certain trade-offs. First, \gls{sde} Matching parameterizes the \gls{ptproc} through the function $F_\ph(\e, t, \X)$~(\cref{eq:f}). This function must not only be smooth and invertible with respect to $\e$, it must also provide access to the conditional score function $\n_{z_t} \log q_\ph(z_t|\X)$~(\cref{eq:posterior_sde_drift}). These requirements restrict the flexibility of $F_\ph$ and, consequently, the posterior distribution approximation. Nevertheless, to the best of our knowledge, \gls{sde} Matching offers the most flexible parameterization that enables \gls{simfree} access to latent samples $z_t \sim q_\ph(z_t|\X)$ of the conditional \gls{sde} \gls{ptproc}.

Another limitation is the computational cost of evaluating the \gls{ptproc}' \gls{sde} drift term for high-dimensional $g_\th(z_t, t)$~(\cref{eq:prior_sde}) of general form, as discussed in \cref{sec:posterior_process}. However, it is worth reiterating that other memory-efficient training of Latent \glspl{sde} faces this same limitation.

\gls{sde} Matching does not constrain the flexibility of the \gls{prproc}, which defines the generative model. Nevertheless, developing methods that allow even more flexible parameterization of $F_\ph$ and diffusion terms $g_\th$ remains a promising direction for future research. Additionally, understanding the impact of this flexibility in modeling latent dynamics versus dynamics in the original data space is an interesting open question.

We believe that thanks to its \gls{simfree} properties, \gls{sde} Matching has the potential to scale Latent \gls{sde}-based modeling to high-dimensional applications like audio and video generation and many other applications where they were previously infeasible. It could potentially also contribute to the development of \gls{simfree} methods for training Latent \gls{ode} models and  learning policies in reinforcement learning. We leave these directions for future work.

\section{Conclusion}
\label{sec:conclusion}

We introduced \gls{sde} Matching, a \gls{simfree} framework for training Latent \glspl{sde}. By leveraging insights from score-based generative models, we formulated a method that eliminates the need for costly numerical simulations while maintaining the expressiveness of Latent \glspl{sde}. Our approach directly parameterizes the marginal posterior distributions, enabling efficient training and conditional inference.

Through both synthetic and real-world experiments, we demonstrated that \gls{sde} Matching achieves performance comparable to or better than existing adjoint sensitivity-based methods, while significantly reducing computational complexity. The ability to estimate latent trajectories without solving high-dimensional \glspl{sde} makes our method particularly suitable for large-scale time-series and sequence modeling.

Despite its advantages, \gls{sde} Matching introduces trade-offs in terms of posterior parameterization flexibility. Nevertheless, to the best of our knowledge, \gls{sde} Matching proposes the most flexible parameterization of the \gls{ptproc} that allows \gls{simfree} sampling of latent variables. Exploring more expressive function classes for posterior distributions and further extending \gls{sde} Matching are promising directions for future work.

Furthermore, our approach provides a foundation for scaling Latent \glspl{sde} to previously infeasible sequential domains such as audio and video. We believe that the scalability and efficiency of \gls{sde} Matching will enable broader applications in scientific modeling, finance, and healthcare, where structured uncertainty modeling is critical. Future research may also explore extensions to \gls{simfree} training of Latent \glspl{ode} and connections to reinforcement learning and stochastic optimal control.

\section*{Impact Statement}

This paper presents work whose goal is to advance the field of Machine Learning. There are many potential societal consequences of our work, none which we feel must be specifically highlighted here.

\bibliography{bibliography}
\bibliographystyle{icml2025}

%%%%%%%%%%%%%%%%%%%%%%%%%%%%%%%%%%%%%%%%%%%%%%%%%%%%%%%%%%%%%%%%%%%%%%%%%%%%%%%
%%%%%%%%%%%%%%%%%%%%%%%%%%%%%%%%%%%%%%%%%%%%%%%%%%%%%%%%%%%%%%%%%%%%%%%%%%%%%%%
% APPENDIX
%%%%%%%%%%%%%%%%%%%%%%%%%%%%%%%%%%%%%%%%%%%%%%%%%%%%%%%%%%%%%%%%%%%%%%%%%%%%%%%
%%%%%%%%%%%%%%%%%%%%%%%%%%%%%%%%%%%%%%%%%%%%%%%%%%%%%%%%%%%%%%%%%%%%%%%%%%%%%%%
\newpage
\appendix
\onecolumn
\section{Detailed Description of the Posterior Process}
\label{app:posterior}

\subsection{Marginal Distribution}
\label{app:posteriormarginal}
The posterior marginal distribution is defined by a transformation $F_\ph$ from noise $\e$ for each $t$ and $\X$:
\begin{align}
    z_t = F_\ph(\e, t, \X), \quad \e \sim q(\e) = \N(\e; 0, I).
\end{align}
Assuming that $F_\ph$ is invertible and differentiable in $\e$ for each $t$ and $\X$. Then, we can use the standard change of variables result for distributions \citep{rudin2006real,bogachev2007measure} to derive the density of $z_t$
\begin{align}
    q_\ph(z_t|\X) &= q(\e) \Big|_{\e = F_\ph^{-1}(z_t, t, \X)} \cdot \left| \frac{\p F^{-1}_\ph(z_t, t, \X)}{\p z_t} \right|,
    \label{eq:app_marginal}
\end{align}
where $F^{-1}_\ph$ is the inverse of $F_\ph$ and $|\cdot|$ denotes the absolute value of the determinant. This is a well-known result of normalizing flows \citep{papamakarios2021normalizing} applied to our setting.

\subsection{Conditional \gls{ode}}
\label{app:posteriorode}
We leverage results based on \emph{flows}, solutions to \glspl{ode}, inspired by \citet{bilovs2021neural,lee2012introduction,bartosh2024neural} to turn a flow $F_\ph$ into its corresponding generating \gls{ode} $\barf_\ph$. 
\begin{proposition}
    Let $F_\ph: \R^D \times \R \times \setX \mapsto \R^D$ be a smooth function such that $F_\ph(\cdot, t, \X)$ is invertible $\forall t, \X$. Then, $F_\ph(\cdot, t, \X)$ is a \emph{flow} with the infinitesimal generator
    \begin{align}
        \frac{d F_\ph(\e, t, \X)}{dt},
    \end{align}
    which is a smooth vector field $f$ such that
    \begin{align}
         \frac{\p F_\ph(\e, t, \X)}{\p t} &= f(F_\ph(\e, t, \X),t,\X).
    \end{align}
    \label{prop:flowode}
\end{proposition}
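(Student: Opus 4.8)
The plan is to exhibit the vector field $f$ explicitly and then verify the defining relation, with the genuine work concentrated in showing that $f$ is both well-defined and smooth. I would set
\begin{align}
    f(z, t, \X) \coloneqq \left. \frac{\p F_\ph(\e, t, \X)}{\p t} \right|_{\e = F_\ph^{-1}(z, t, \X)},
\end{align}
that is, differentiate $F_\ph$ in $t$ with the noise $\e$ held fixed and then reindex the result by the current state $z$ through the inverse map. The conceptual crux is that this substitution is meaningful at all: because $F_\ph(\cdot, t, \X)$ is invertible for every $(t, \X)$, there is a unique $\e$ with $F_\ph(\e, t, \X) = z$, so the right-hand side depends on no arbitrary choice and $f$ is a single-valued function of $(z, t, \X)$ alone. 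In other words, the instantaneous velocity $\p_t F_\ph$, which a priori carries a dependence on the label $\e$, is in fact fully determined by the current position $z_t = F_\ph(\e, t, \X)$ -- precisely the structural property a vector field must have to generate the trajectories.

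Next I would verify the \gls{ode} relation, which is then essentially tautological. Fixing $\e, t, \X$ and writing $z_t = F_\ph(\e, t, \X)$, invertibility gives $F_\ph^{-1}(z_t, t, \X) = \e$, so evaluating the definition of $f$ at $z = z_t$ reinserts the correct noise and recovers $\p_t F_\ph(\e, t, \X)$; hence $\p_t F_\ph(\e, t, \X) = f(F_\ph(\e, t, \X), t, \X)$. Since $\e$ is held constant along a trajectory, the total derivative in the stated generator coincides with this partial derivative, so each curve $t \mapsto F_\ph(\e, t, \X)$ solves $\dot z_t = f(z_t, t, \X)$. As distinct $\e$ yield distinct initial conditions $z_0 = F_\ph(\e, 0, \X)$, the family $\{F_\ph(\cdot, t, \X)\}_t$ is exactly the flow of this field.

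The main obstacle is smoothness of $f$. Invertibility of a smooth map does not by itself guarantee a smooth inverse (as $\e \mapsto \e^3$ on $\R$ shows), so I would read the hypothesis as requiring $F_\ph(\cdot, t, \X)$ to be a diffeomorphism, equivalently that its spatial Jacobian $\p F_\ph / \p \e$ is nonsingular everywhere. The inverse function theorem then furnishes a map $F_\ph^{-1}(z, t, \X)$ that is smooth jointly in $(z, t, \X)$. Because $\p_t F_\ph$ is smooth by the assumption on $F_\ph$, and $f$ is the composition of $\p_t F_\ph$ with this smooth inverse in the $\e$-slot, $f$ is smooth. I would close by noting that smoothness -- or at minimum local Lipschitz continuity in $z$ -- is exactly what licenses a Picard--Lindelöf uniqueness argument, so that $F_\ph$ is the \emph{unique} flow of $f$ and the reindexing above is unambiguous; this also identifies $f$ with the explicit generator $\barf_\ph$ appearing in \cref{eq:posterior_ode}.
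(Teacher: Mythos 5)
Your proof is correct, but it takes a genuinely different, more self-contained route than the paper. The paper dispatches the proposition in a single sentence by citing the Fundamental Theorem on Flows \citep[Theorem 9.12]{lee2012introduction} applied to the family $F_\ph(\cdot, t, \X)$; you instead construct the generator explicitly, observe that invertibility makes the reindexing $\e = F_\ph^{-1}(z_t, t, \X)$ unambiguous, and verify the defining identity by the tautological substitution $F_\ph^{-1}\big(F_\ph(\e, t, \X), t, \X\big) = \e$, so that each curve $t \mapsto F_\ph(\e, t, \X)$ solves $dz_t = f(z_t,t,\X)\,dt$. The substantive content you add is precisely what the paper's citation quietly absorbs: (i) you note that smooth plus invertible does not yield a smooth inverse (your $\e \mapsto \e^3$ example), so the hypothesis must be read as ``diffeomorphism,'' i.e.\ everywhere nonsingular spatial Jacobian, after which the inverse function theorem gives smoothness of $F_\ph^{-1}$ jointly in $(z,t)$ and hence smoothness of $f$ as the composition of $\p_t F_\ph$ with the inverse; and (ii) you invoke Picard--Lindel\"of so that the trajectories are the \emph{unique} integral curves, making the identification of $f$ with $\barf_\ph$ in \cref{eq:posterior_ode} exact. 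What the paper's approach buys is brevity and a rigorous anchor in the smooth-manifolds literature; what yours buys is transparency, and in particular it surfaces that the proposition as literally stated (mere invertibility) is slightly too weak to conclude the generator is a smooth vector field --- a sharpening of the hypothesis that the paper's one-line proof never makes explicit.
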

\begin{proof}
    The result is a consequence of the Fundamental Theorem on Flows \citep[Theorem 9.12]{lee2012introduction} for the flow defined by $F_\ph(\cdot, t, \X)$.
\end{proof}
Using \cref{prop:flowode} with $\e = F_\ph^{-1}(z_t, t, \X)$, identifying that $F_\ph(F_\ph^{-1}(z_t, t, \X),t,\X) = z_t$, lets us construct the conditional \gls{ode} whose solution matches those of the flow $z_t = F_\ph(\e, t, \X)$:
\begin{align}
    d z_t & = \barf_\ph(z_t, t, \X) d t, 
    \quad \textrm{where} \quad \label{eq:app_posterior_ode} \\
    \barf_\ph(z_t, t, \X) & = \left. \frac{\p F_\ph(\e, t, \X)}{\p t} \right|_{\e=F_\ph^{-1}(z_t, t, \X)}.
\end{align}
Initializing using $z_0 \sim q_\phi(z_0|\X)$ and solving the above conditional \gls{ode} in \cref{eq:app_posterior_ode} ensures $z_t \overset{d}{=} F_\ph(\e, t, \X)$ for $\e \sim q(\e)$, where $\overset{d}{=}$ denotes equal in distribution.

\subsection{Conditional \gls{sde}}
\label{app:posteriorsde}
To derive the conditional \gls{sde} we leverage a result by \citet{song2021scorebased} that relates the marginal distributions of an \gls{sde} with its corresponding (probability flow) \gls{ode}, restated in \cref{prop:odesde} for convenience.

\begin{proposition}
    The marginal distributions $p_t(z_t)$ of the \gls{sde}
    \begin{align}
         d z_t &= h(z_t, t) d t + g(z_t, t) d \w,
    \end{align}
    are, under suitable conditions on the drift and diffusion terms $h$ and $g$, identical to the distributions induced by the \gls{ode}
    \begin{align}
        d z_t &= \left(h(z_t, t) -\frac{1}{2} g(z_t, t) g^\top(z_t, t) \n_{z_t} \log p_t(z_t)  -\frac{1}{2} \n_{z_t} \cdot \left[g(z_t, t) g^\top(z_t, t)\right]\right) d t.
        \label{eq:app_probflowode}
    \end{align}
    \label{prop:odesde}
\end{proposition}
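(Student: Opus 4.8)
The plan is to show that the time-marginals $p_t$ of the \gls{sde} and the densities transported along the \gls{ode} solve one and the same first-order transport equation with identical initial data $p_0$, so that under the stated regularity conditions uniqueness of the solution forces them to agree at every $t$. The starting point is the Fokker--Planck (Kolmogorov forward) equation for the \gls{sde}. Writing $G(z,t) = g(z,t)\, g^\top(z,t)$ for the symmetric diffusion matrix, it reads
\begin{align}
    \partial_t p_t(z) = -\nabla_z \cdot \big[ h(z,t)\, p_t(z) \big] + \frac{1}{2} \sum_{i,j} \partial_{z_i} \partial_{z_j} \big[ G_{ij}(z,t)\, p_t(z) \big].
\end{align}
The only real task is to recast the second-order term as the divergence of a probability flux, so that the entire right-hand side becomes a single divergence.

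First I would peel off one derivative, rewriting the second-order term as $\nabla_z \cdot \big[ \tfrac{1}{2} \nabla_z \cdot (G p_t) \big]$, where the inner divergence contracts the second index of the matrix $G p_t$ (which is unambiguous since $G$ is symmetric). Expanding the inner expression by the product rule gives
\begin{align}
    \tfrac{1}{2} \nabla_z \cdot (G p_t) = \tfrac{1}{2} \big( \nabla_z \cdot G \big)\, p_t + \tfrac{1}{2}\, G\, \nabla_z p_t,
\end{align}
and applying the log-derivative identity $\nabla_z p_t = p_t\, \nabla_z \log p_t$ converts the flux into $\tfrac{1}{2}\, p_t \big( \nabla_z \cdot G + G\, \nabla_z \log p_t \big)$. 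Here the score function $\nabla_z \log p_t$ enters precisely as in \cref{eq:app_probflowode}. Substituting back, the Fokker--Planck equation collapses to the continuity (Liouville) equation
\begin{align}
    \partial_t p_t = -\nabla_z \cdot \Big[ p_t \Big( h - \tfrac{1}{2}\, G\, \nabla_z \log p_t - \tfrac{1}{2}\, \nabla_z \cdot G \Big) \Big].
\end{align}

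The bracketed velocity field is exactly the \gls{ode} drift of \cref{eq:app_probflowode}, and the displayed continuity equation is precisely the diffusion-free Fokker--Planck equation for the deterministic dynamics with that drift. Hence the \gls{sde} marginals and the \gls{ode}-transported densities obey the same transport equation; since they also share the initial condition $p_0$, uniqueness of solutions yields that their marginals coincide for all $t$. I expect the main obstacle to lie not in this purely algebraic manipulation but in justifying the qualifier \emph{``under suitable conditions''}: one must assume enough regularity and decay on $h$, $g$, and $p_t$ for the Fokker--Planck equation to hold with a genuine density, for the implicit integrations by parts to be valid, and --- most critically --- for the first-order continuity equation to admit a \emph{unique} solution given $p_0$, so that matching PDEs together with matching initial data genuinely imply matching marginals rather than mere consistency.
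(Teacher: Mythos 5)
Your proposal is correct and follows essentially the same route as the paper, whose ``proof'' is simply a citation to \citet[Appendix D.1]{song2021scorebased}: that reference derives the probability flow \gls{ode} by exactly your manipulation, rewriting the second-order term of the Fokker--Planck equation as a divergence of a flux via $\nabla_z p_t = p_t \nabla_z \log p_t$ so that the equation collapses to the continuity equation with the stated velocity field. Your added emphasis on uniqueness of solutions to the transport equation as the content of the ``suitable conditions'' qualifier is an accurate reading of what the citation leaves implicit.
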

\begin{proof}
    See \citet[Appendix D.1]{song2021scorebased}.
\end{proof}
Note that this equivalence lets us easily find the corresponding (probability flow) \gls{ode}. However, we can equally well use it in the reverse order with a known \gls{ode} to construct the corresponding \gls{sde} by selecting a diffusion term $g$ and using \cref{eq:app_probflowode} to compute the drift term $h$.

For the \gls{prproc} diffusion term $g_\th(z_t,t)$ with velocity field \cref{eq:app_posterior_ode} we apply \cref{prop:odesde} to find the conditional \gls{sde} with marginals $q_\ph(z_t|\X)$
\begin{align}
    d z_t = f_{\th,\ph}(z_t, t, \X) &d t  + g_\th(z_t, t) d \w, \quad \textrm{where} 
    \label{eq:app_posterior_sde} \\
    f_{\th,\ph}(z_t, t, \X) &= 
    \label{eq:app_posterior_sde_drift}
         \barf_\ph(z_t, t, \X) + \frac{1}{2} g_\th(z_t, t) g^\top_\th(z_t, t) \n_{z_t} \log q_\ph(z_t|\X) + \frac{1}{2} \n_{z_t} \cdot \left[ g_\th(z_t, t) g^\top_\th(z_t, t) \right].
\end{align}
This result follows by simply identifying terms in \cref{prop:odesde} with our specific case
\begin{align*}
    h(z_t,t) &\equiv f_{\th,\ph}(z_t, t, \X), \\
    g(z_t,t) &\equiv g_\th(z_t,t).
\end{align*}

\subsection{Parameterization}
\label{app:parameterization}

As discussed in \cref{sec:posterior_process}, evaluating the posterior \gls{sde} (\cref{eq:posterior_sde}) requires access to the conditional score function $\n_{z_t} \log q_\ph(z_t|\X)$. Since we parameterize the posterior marginals $q_\ph(z_t|\X)$ implicitly through an invertible transformation $F_\ph$ (\cref{eq:f}) of a random variable $\e \sim q(\e)$ into $z_t$, the score function can in general be computed using \cref{eq:app_marginal}:
\begin{align}
    & \n_{z_t} \log q_\ph(z_t|\X) = \n_{z_t} \left[ \log q(\e) \Big|_{\e = F_\ph^{-1}(z_t, t, \X)} + \log \left| \frac{\p F^{-1}_\ph(z_t, t, \X)}{\p z_t} \right| \right].
    \label{eq:app_score}
\end{align}

The first term in \cref{eq:app_score} represents the log-density of the noise distribution, which is straightforward to compute. The second term is the log-determinant of the Jacobian matrix of the inverse transformation. If this Jacobian is available, the score function can be efficiently computed using automatic differentiation tools.

In this work, we parameterize the function $F_\ph$ as follows:
\begin{align}
    F_\ph(\e, t, \X) = \mu_\ph(\X, t) + \s_\ph(\X, t) \e, \quad \textrm{and} \quad F_\ph^{-1}(z_t, t, \X) = \s_\ph^{-1}(\X, t) \big( z_t - \mu_\ph(\X, t) \big)
    \label{eq:app_f}
\end{align}
where $\s_\ph$ is a matrix valued function.

Substituting this parameterization into \cref{eq:app_score}, the score function simplifies to:
\begin{align}
    \n_{z_t} \log q_\ph(z_t|\X) = \left[ \s_\ph(\X, t) \s_\ph^\top(\X, t) \right]^{-1} \big( \mu_\ph(\X, t) - z_t \big) = \left. - \s_\ph^{-\top}(\X, t) \e \right|_{\e=F_\ph^{-1}(z_t, t, \X)}.
\end{align}

The \gls{sde} Matching framework allows for other, potentially more flexible, parameterizations of the function $F_\ph$. However, we leave the exploration of such alternatives for future research.

\section{Theoretical Considerations}
\label{app:theory}

In this section, we provide an extended discussion on the connection between the prior and posterior processes. Our primary goal is to address the question: under what conditions does the variational bound in \cref{eq:loss} become tight and match the true likelihood of the model?

The variational bound becomes tight when the approximate posterior distribution exactly matches the true posterior. In the case of the Latent \gls{sde}, this means that the \gls{ptproc} must match the true posterior of the \gls{prproc}.

The prior process~(\cref{eq:prior_sde}) is given by:
\begin{align}
    d z_t &= h_\th(z_t, t) d t + g_\th(z_t, t) d \w.
\end{align}

Given a series of observations $\X$, the true posterior process can be derived via Doob’s $h$-transform \citep{mider2021continuous,park2024amortized}, yielding:
\begin{align}
    d z_t &= \Big[ \underbrace{ h_\th(z_t, t) + g_\th(z_t, t) g^\top_\th(z_t, t) \n_{z_t} \log p_\th \left( \X_t | z_t \right) }_{h_\th(z_t, t, \X)} \Big] d t + g_\th(z_t, t) d \w,
\end{align}
where $\X_t = \{x_s : x_s \in \X, s \geq t\}$ denotes the set of future observations from time $t$ onward.

We can also consider the corresponding deterministic process that follows the posterior marginals $p_\th(z_t|\X)$
\begin{align}
    \frac{d z_t}{d t} 
    &= \barh_\th(z_t, t, \X) \\
    &= h_\th(z_t, t) + g_\th(z_t, t) g^\top_\th(z_t, t) \left[ \n_{z_t} \log p_\th \left( \X_t | z_t \right) - \frac{1}{2} \n_{z_t} \log p_\th \left( z_t | \X \right) \right] - \frac{1}{2} \n_{z_t} \cdot \left[ g_\th(z_t, t) g^\top_\th(z_t, t) \right]
    \label{eq:app_prior_ode}
\end{align}

Now, consider the approximate \gls{ptproc}~(\cref{eq:posterior_sde}) of the form:
\begin{align}
    d z_t &= f_{\th, \ph}(z_t, t, \X) d t + g_\th(z_t, t) d \w.
\end{align}
The approximate process matches the true posterior process if the drift terms are equal: $f_{\th, \ph}(z_t, t, \X) \equiv h_\th(z_t, t, \X)$ or equivalently, if the corresponding deterministic processes~(\cref{eq:posterior_ode,eq:app_prior_ode}) match: $\barf_{\th, \ph}(z_t, t, \X) \equiv \barh_\th(z_t, t, \X)$. In terms of the reparameterization function $F_\ph(\e, t, \X)$~(\cref{eq:f}) that defines the approximate \gls{ptproc}, tthe variational bound becomes tight, and the approximate \gls{ptproc} matches the true posterior, if $F_\ph$ learns the solution trajectories of the \gls{ode} defined by $\barh_\th(z_t, t, \X)$~(\cref{eq:app_prior_ode}).

Unfortunately, as is typical in variational inference, the true posterior is intractable. Nevertheless, its analytical form provides valuable insight into the behavior that the approximate posterior should approximate. Therefore, with a sufficiently flexible reparameterization function $F_\ph(\e, t, \X)$, the \gls{sde} Matching approach should, in principle, be capable of learning the true \gls{ptproc} and the variational bound becomes equal to the negative log-marginal likelihood.

\section{Additional Results}
\label{app:additional}

\subsection{Convergence to Likelihood}
\label{app:convergence}

To empirically validate the tightness of the ELBO in SDE Matching, as discussed in \cref{app:theory}, we set up a linear stochastic system of the form:
\begin{align}
    d z_t &= F(t) z_t dt + L(t) d\w \\
    x_t &= H(t) z_t + r_t,
\end{align}
where $r_t \sim \N(0, R(t))$. For this system, the true log-marginal likelihood can be computed using the Kalman filter \citep{sarkka2019applied}.

In this experiment, we use simple coefficients: $F(t) = -t$, $L(t) = t$, $H(t) = 1$, and $R(t) = 0.01$. The model is parameterized identically to our experiment on the 3D stochastic Lorenz attractor (\cref{sec:lorenz}). As shown in \cref{fig:lin_nll}, the SDE Matching ELBO closely converges to the true log-marginal likelihood. Furthermore, similar to our results in \cref{sec:lorenz}, SDE Matching achieves significantly faster convergence than the adjoint sensitivity method and requires approximately $20 \times$ less time per training iteration.

\subsection{Robustness of Gradients}
\label{app:grads}

We present additional experiments to assess the robustness of SDE Matching.

Using the setup from \cref{app:convergence}, we compare the gradient norms and variances over training time. As shown in \cref{fig:lin}, SDE Matching consistently exhibits both lower gradient norms and reduced variance compared to the adjoint sensitivity method.

\subsection{Importance of State-Dependent Diffusion Term}
\label{app:diffusion_term}

We also present an example that highlights the importance of state-dependent diffusion term (volatility), a feature not supported by the previous simulation-free approach ARCTA \citep{course2024amortized}.

To demonstrate this, we design an experiment based on the stochastic Lotka–Volterra system with dynamics:
\begin{align}
    d x &= (\a x + \beta x y ) dt + \s x d \w \\
    d y &= (\delta xy + \gamma y ) dt + \s y d \w.
\end{align}

We follow the setup from \citet{course2024amortized} (Appendix G.1) with volatility coefficient $\sigma = 0.15$ and use the same parameterization as in \cref{sec:lorenz}.

We then apply SDE Matching to train models with both state-independent and state-dependent volatility functions. As illustrated in \cref{fig:lotka_volterra}, the model with state-independent volatility fails to accurately capture the correct trajectory structure, whereas the state-dependent model succeeds.

State-dependant volatility is often a relevant property of many dynamical systems in e.g. finance \citep{oksendal2003stochastic}, biology \citep{vadillo2019comparing}, neuroscience \citep{elgazzar2024generative}, and causal inference \cite{peters2022causal}, where Latent SDEs can be applied for modelling. The Lotka–Volterra system which we use in this experiment is an example of such a biological system. In Neural SDEs, state-dependant volatility also leads to dynamics that is more robust to distribution shift \citep{oh2024stable}. Finally, the state-dependant volatility is an important contributing factor in our MOCAP experiment (\cref{sec:mocap}), where the state-dependent SDE Matching and \citet{li2020scalable} both perform much better than the state-independent ARCTA \citep{course2024amortized}. Therefore, we believe that, together with more general marginals of the posterior process through normalising flows, enabling simulation-free training with state-dependant volatility is also an important contribution of SDE Matching.

\begin{figure}[!t]
    \centering
    \parbox{.3\textwidth}{
        \begin{subfigure}{\linewidth}
            \includegraphics[width=\textwidth]{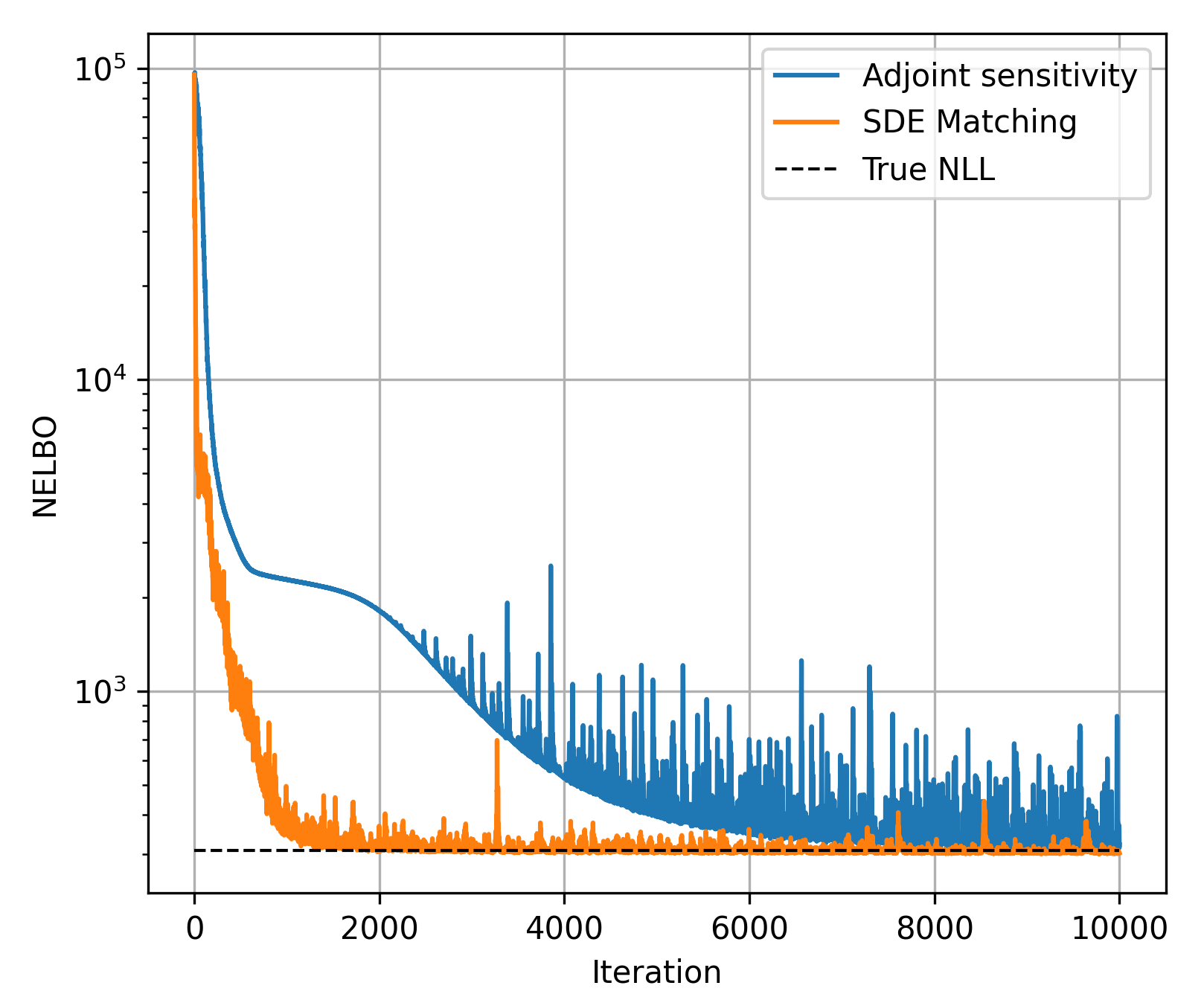}
            \label{fig:lin_nll}
            \caption{Convergence of NELBO}
        \end{subfigure}
    }
    \parbox{.3\textwidth}{
        \begin{subfigure}{\linewidth}
            \includegraphics[width=\textwidth]{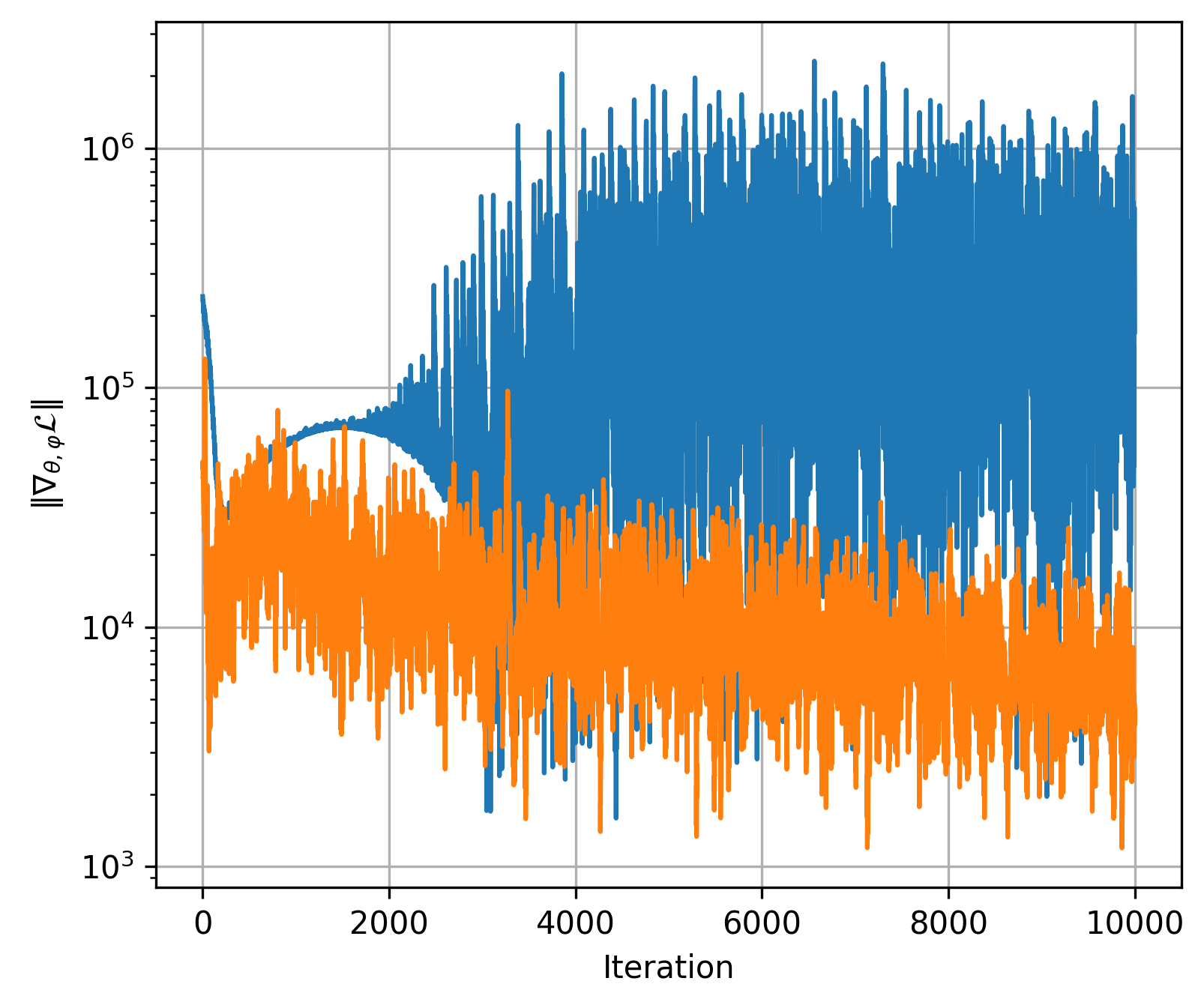}
            \caption{Norm of gradient}
        \end{subfigure}
    }
    \parbox{.3\textwidth}{
        \begin{subfigure}{\linewidth}
            \includegraphics[width=\textwidth]{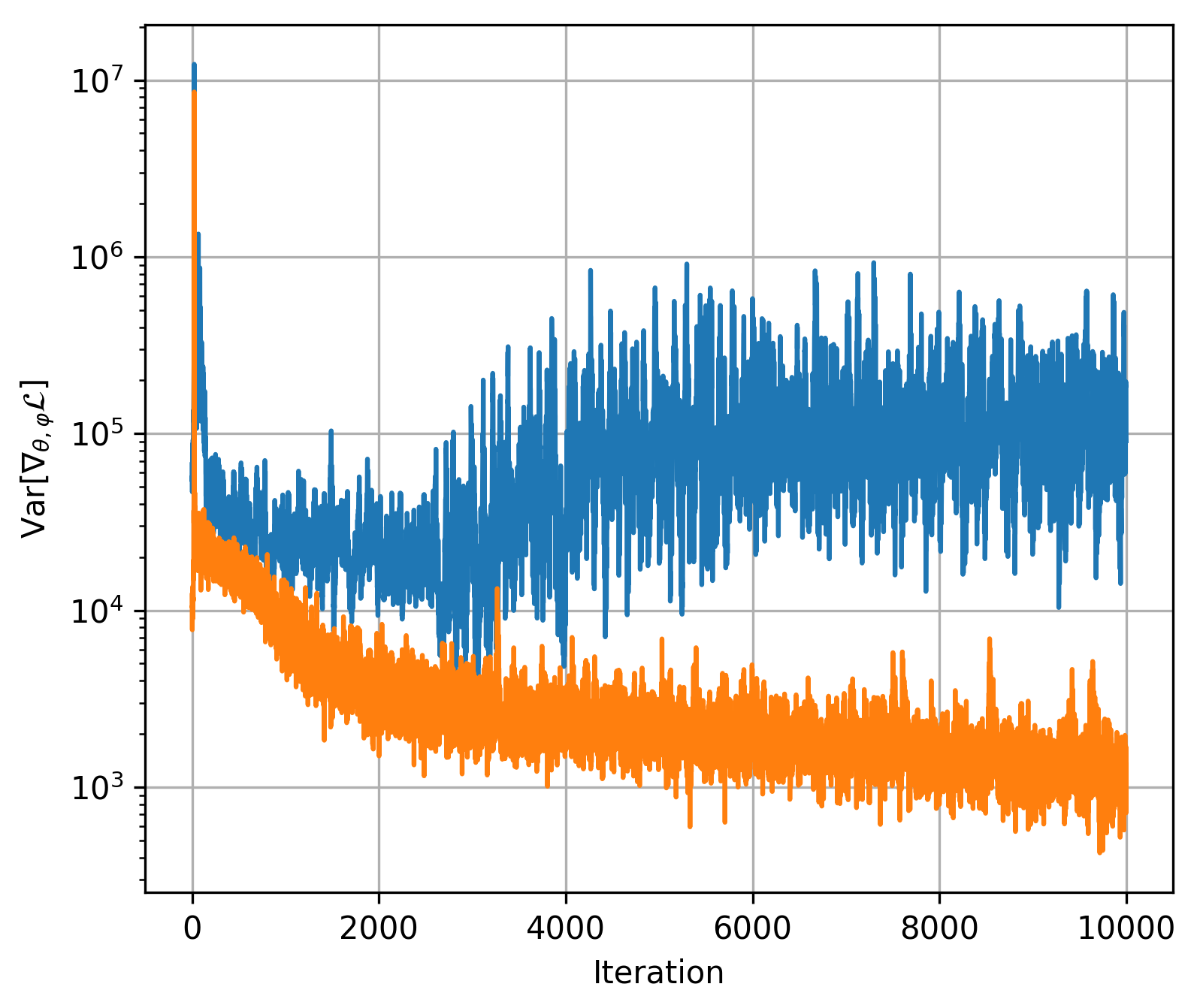}
            \caption{Variance of gradients}
        \end{subfigure}
    }
    \caption{Linear system training dynamics for Latent \gls{sde} trained with \textcolor[HTML]{1F77B4}{adjoint sensitivity method} and \textcolor[HTML]{ff7f0e}{\gls{sde} Matching}.}
    \label{fig:lin}
\end{figure}

\begin{figure}[!t]
    \centering
    \centerline{\includegraphics[width=\textwidth]{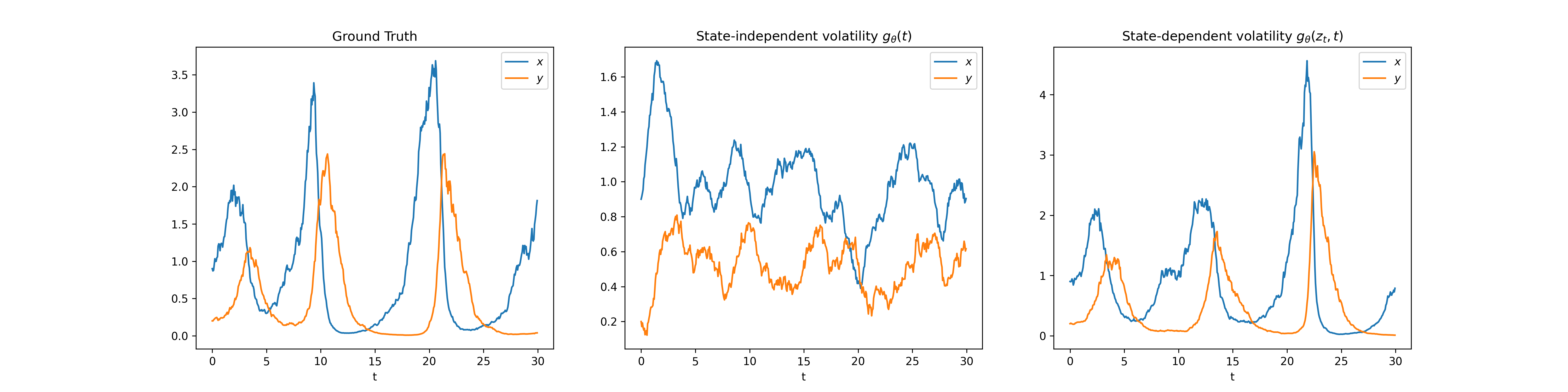}}
    \caption{Lotka–Volterra system, learned trajectories for SDE Matching with state-independent and state-dependent volatility.}
    \label{fig:lotka_volterra}
\end{figure}

\section{Implementation Details}
\label{app:implementation}

For the experiments presented in \cref{sec:experiments}, we follow the setup from \citet{li2020scalable} for both the 3D stochastic Lorenz attractor and the motion capture dataset. We use the same parameterizations and hyperparameters for the \gls{prproc}, including the observation models $p_\th(x_t|z_t)$ and the coordinate-wise diagonal diffusion term $g_\th(z_t, t)$.

We make a slight modification to the parameterization of the \gls{ptproc}. First, we do not use an encoder, i.e., a separate network that defines the initial conditions of the \gls{ptproc}. Second, \citet{li2020scalable} propose using a time-reversal GRU layer \cite{cho2014learning} to aggregate information from observations into what they refer to as a context. This context is then used to predict the drift term of the posterior \gls{sde} via an MLP. To maintain consistency with \citet{cho2014learning}, we use the same hyperparameters for the GRU and MLP. However, instead of predicting the drift term, we use the MLP to predict the functions $\mu_\ph$ and $\log \s_\ph$~(\cref{eq:app_f}) based on the context computed from all observations and the time step $t$. We use diaganal parameterization $\s_\ph$.

For the moving pendulum video experiment, we follow the setup from \citet{course2024amortized} for both data generation and the parameterization of the \gls{prproc}. For the \gls{ptproc}, we use the same parameterization as in previous experiments, but replace the MLP with a convolutional encoder from \citet{course2024amortized} to process image observations.

For optimization, we use the same training hyperparameters, including the Adam optimizer \cite{kingma2014adam} and the same number of training iterations. However, we do not apply any additional regularization, reweighting, or annealing techniques.

We believe that there exists much more efficient ways to parameterize the \gls{ptproc} for \gls{sde} Matching. However, in this work, our primary goal was to provide a fair comparison with the standard Latent \gls{sde} training approach rather than focusing on specific design choices such as prior and posterior model architectures.

\end{document}